\documentclass{article} % For LaTeX2e
\usepackage{nips15submit_e}
\usepackage[round]{natbib} 
\usepackage{amsmath,latexsym,amsbsy,amssymb, amsthm}

\usepackage{varioref}
\usepackage{hyperref}
\hypersetup{colorlinks=true, allcolors=blue}
\usepackage{cleveref}

\newtheorem{Theorem}{Theorem}[section]

\newtheorem{Corollary}{Corollary}[section]

\newtheorem{Assumption}{Assumption}[section]

\newtheorem{Remark}{Remark}[section]
\newtheorem{Lemma}{Lemma}[section]

\numberwithin{equation}{section}

\newcommand{\h}{\hspace*{.24in}}
\newcommand{\mb}{\mathbf}
\newcommand{\mbb}{\mathbb}
\newcommand{\mc}{\mathcal}

\title{Fast learning rates with heavy-tailed losses}

\author{Vu Dinh$^1$ {\thanks{V Dinh was supported by DMS-1223057 and CISE-1564137 from the National Science Foundation and U54GM111274 from the National Institutes of Health. LST Ho was supported by NSF grant IIS 1251151.}} \quad 
Lam Si Tung Ho$^2$ \quad
Duy Nguyen$^3$ \quad
Binh T. Nguyen$^4$ \\
$^1$Program in Computational Biology, Fred Hutchinson Cancer Research Center\\
$^2$Department of Biostatistics, University of California, Los Angeles \\
$^3$Department of Statistics, University of Wisconsin-Madison \\
$^4$Department of Computer Science, University of Science, Vietnam
}

\nipsfinalcopy % Uncomment for camera-ready version

\begin{document}
\maketitle

\begin{abstract}
%Recent progresses about fast learning rates with structured data have refined our understanding about settings under which such rates are possible.
%The literature, however, has mainly focused on bounded losses and little has been known about rates of learning in the unbounded cases.
We study fast learning rates when the losses are not necessarily bounded and may have a distribution with heavy tails. 
To enable such analyses, we introduce two new conditions: (i)  the envelope function $\sup_{f \in \mathcal{F}}|\ell \circ f|$, where $\ell$ is the loss function and $\mathcal{F}$ is the hypothesis class, exists and is $L^r$-integrable, and (ii) $\ell$ satisfies the multi-scale Bernstein's condition on $\mathcal{F}$.
Under these assumptions, we prove that learning rate faster than $O(n^{-1/2})$ can be obtained and, depending on $r$ and the multi-scale Bernstein's powers, can be arbitrarily close to $O(n^{-1})$.
We then verify these assumptions and derive fast learning rates for the problem of vector quantization by $k$-means clustering with heavy-tailed distributions.
The analyses enable us to obtain novel learning rates that extend and complement existing results in the literature from both theoretical and practical viewpoints.
\end{abstract}

\section{Introduction}

The rate with which a learning algorithm converges as more data comes in play a central role in machine learning.
Recent progress has refined our theoretical understanding about setting under which fast learning rates are possible, leading to the development of robust algorithms that can automatically adapt to data with hidden structures and achieve faster rates whenever possible.
The literature, however, has mainly focused on bounded losses and little has been known about rates of learning in the unbounded cases, especially in cases when the distribution of the loss has heavy tails \citep{van2015fast}.

Most of previous work about learning rate for unbounded losses are done in the context of density estimation \citep{van2015fast, zhang2006e, zhang2006information}, of which the proofs of fast rates implicitly employ the central condition \citep{grunwald2012safe} and cannot be extended to address losses with polynomial tails \citep{van2015fast}. 
Efforts to resolve this issue include \cite{brownlees2015empirical}, which proposes using some robust mean estimators to replace empirical means, and \cite{cortes2013relative}, which derives relative deviation and generalization bounds for unbounded losses with the assumption that $L^r$-diameter of the hypothesis class is bounded.
However, results about fast learning rates were not obtained in both approaches.
Fast learning rates are derived in \cite{lecue2013learning} for  sub-Gaussian losses and in \cite{lecue2012general} for hypothesis classes that have sub-exponential envelope functions.
To the best of our knowledge, no previous work about fast learning rates for heavy-tailed losses has been done in the literature. 

The goal of this research is to study fast learning rates for the empirical risk minimizer when the losses are not necessarily bounded and may have a distribution with heavy tails. 
We recall that heavy-tailed distributions are probability distributions whose tails are not exponentially bounded: that is, they have heavier tails than the exponential distribution.
To enable the analyses of fast rates with heavy-tailed losses, two new assumptions are introduced.
First, we assume the existence and the $L^r$-integrability of the \emph{envelope function} $F = \sup_{f \in \mathcal{F}}|f|$ of the hypothesis class $\mathcal{F}$ for some value of $r \ge 2$, which enables us to use the results of \cite{lederer2014new} on concentration inequalities for suprema of empirical unbounded processes.
Second, we assume that the loss function satisfies the \emph{multi-scale Bernstein's condition}, a generalization of the standard Bernstein's condition for unbounded losses, which enables derivation of fast learning rates. 

Building upon this framework, we prove that if the loss has finite moments up to order $r$ large enough and if the hypothesis class satisfies the regularity conditions described above, then learning rate faster than $O(n^{-1/2})$ can be obtained. 
Moreover, depending on $r$ and the multi-scale Bernstein's powers, the learning rate can be arbitrarily close to the optimal rate $O(n^{-1})$.
We then verify these assumptions and derive fast learning rates for the $k$-mean clustering algorithm and prove that if the distribution of observations has finite moments up to order $r$ and satisfies the Pollard's regularity conditions, then fast learning rate can be derived. 
The result can be viewed as an extension of the result from \cite{antos2005individual} and \cite{levrard2013fast} to cases when the source distribution has unbounded support, and produces a more favorable convergence rate than that of \cite{telgarsky2013moment} under similar settings.

\section{Mathematical framework}

Let the hypothesis class $\mathcal{F}$ be a class of functions defined on some measurable space $\mathcal{X}$ with values in $\mathbb{R}$. Let $Z=(X,Y)$ be a random variable taking values in $\mc{Z} = \mathcal{X} \times \mathcal{Y}$ with probability distribution $P$ where $\mathcal{Y} \subset \mathbb{R}$. 
The loss $\ell: \mathcal{Z} \times \mc{F} \to \mathbb{R}^+$ is a non-negative function.
For a hypothesis $f \in \mathcal{F}$ and $n$ iid samples $\{Z_1, Z_2, \ldots, Z_n\}$ of $Z$, we define
\[
P\ell(f) = \mathbb{E}_{Z\sim P}[\ell(Z, f)] \h \text{and} \h P_n \ell( f) = \frac{1}{n}\sum_{i=1}^n{\ell(Z_i, f)}.
\]
For unsupervised learning frameworks, there is no output ($\mathcal{Y}= \emptyset$) and the loss has the form $\ell(X, f)$ depending on applications. 
Nevertheless, $P\ell(f)$ and $P_n\ell(f)$ can be defined in a similar manner.
We will abuse the notation to denote the losses $\ell(Z, f)$ by $\ell(f)$. 
We also denote the optimal hypothesis $f^*$ be any function for which $P\ell(f^*)  = \inf_{f\in \mathcal{F}}{P\ell(f)}:=P^*$ and consider the empirical risk minimizer (ERM) estimator $\hat f_n = \arg \min_{f \in \mathcal{F}} P_n \ell(f)$.

We recall that heavy-tailed distributions are probability distributions whose tails are not exponentially bounded. 
Rigorously, the distribution of a random variable $V$ is said to have a heavy right tail if
$\lim_{v \to \infty} e^{\lambda v}\mathbb{P}[V>v] = \infty$ for all $\lambda>0$ and the definition is similar for heavy left tail.
A learning problem is said to be with heavy-tailed loss if the distribution of $\ell(f)$ has heavy tails from some or all hypotheses $f \in \mathcal{F}$.

For a pseudo-metric space $(G,d)$ and $\epsilon>0$, we denote by $\mathcal{N}(\epsilon, G, d)$ the \emph{covering number} of $(G,d)$; that is, $\mathcal{N}(\epsilon, G, d)$ is the minimal number of balls of radius $\epsilon$ needed to cover G. 
The \emph{universal metric entropy} of $G$ is defined by $\mathcal{H}(\epsilon, G) = \sup_{Q} \log \mathcal{N}(\epsilon, G, L^2(Q))$, where the supremum is taken over the set of all probability measures $Q$ concentrated on some finite subset of $G$. 
For convenience, we define $\mathcal{G}= \ell \circ \mathcal{F}$ the class of all functions $g$ such that $g=\ell(f)$
for some $f \in \mathcal{F}$ and denote by $\mathcal{F}_{\epsilon}$ a finite subset of $\mathcal{F}$ such that $\mathcal{G}$ is contained in the union of balls of radius $\epsilon$ with centers in $\mathcal{G}_{\epsilon}=\ell \circ \mathcal{F}_{\epsilon}$.
 We refer to $\mathcal{F}_{\epsilon}$ and $\mathcal{G}_{\epsilon}$ as an $\epsilon$-net of $\mathcal{F}$ and $\mathcal{G}$, respectively.

To enable the analyses of fast rates for learning problems with heavy-tailed losses, throughout the paper, we impose the following regularity conditions on $\mathcal{F}$ and $\ell$. 

\begin{Assumption}[Multi-scale Bernstein's condition]
Define $\mathcal{F}^* = \arg\min_{\mathcal{F}} P\ell(f)$. There exist a finite partition of $\mathcal{F}=\cup_{i \in I}{\mathcal{F}_i}$, positive constants $B = \{ B_i \}_{i \in I}$, constants $\gamma = \{ \gamma_i \}_{i \in I}$ in $(0, 1]$, and $f^* = \{ f^*_i \}_{i \in I} \subset \mathcal{F}^*$ such that $\mathbb{E} [(\ell(f) - \ell(f^*_i))^{2}] \le B_i \left( \mathbb{E} [\ell(f) - \ell(f^*_i)] \right)^{\gamma_i}$ for all $i \in I$ and $f \in \mathcal{F}_i$.
\label{as3}
\end{Assumption}

\begin{Assumption}[Entropy bounds]
The hypothesis class $\mathcal{F}$ is separable and there exist $\mc{C} \ge 1$, $K\ge 1$ such that $ \forall \epsilon \in (0,K]$, the $L_2 (P)$-covering numbers and the universal metric entropies of $\mathcal{G}$ are bounded as $\log \mathcal{N}(\epsilon, \mathcal{G}, L_2(P)) \le \mc{C} \log(K/\epsilon)$ and $\mathcal{H}(\epsilon, \mc{G}) \le \mc{C} \log(K/\epsilon)$.
\label{as1}
\end{Assumption}

\begin{Assumption}[Integrability of the envelope function]
There exists $W>0, r\ge \mc{C}+1$ such that $\left(\mathbb{E} \sup_{g \in \mathcal{G}} {|g|^{r}}\right)^{1/r} ~\le~ W$.
\label{as2}
\end{Assumption}

The multi-scale Bernstein's condition is more general than the Bernstein's condition. 
This entails that the multi-scale Bernstein's condition holds whenever the Bernstein's condition does, thus allows us to consider a larger class of problems. 
In other words, our results are also valid with the Bernstein's condition.
The multi-scale Bernstein's condition is more proper to study unbounded losses since it is able to separately consider the behaviors of the risk function on microscopic and macroscopic scales, for which the distinction can only be observed in an unbounded setting.

We also recall that if $\mathcal{G}$ has finite VC-dimension, then Assumption $\ref{as1}$ is satisfied \citep{boucheron2013concentration, bousquet2004introduction}. 
Both Bernstein's condition and the assumption of separable parametric hypothesis class are standard assumptions frequently used to obtain faster learning rates in agnostic settings.
A review about the Bernstein's condition and its applications is \cite{mendelson2008obtaining}, while fast learning rates for bounded losses on hypothesis classes satisfying Assumptions $\ref{as1}$ were previously studied in \cite{mehta2014stochastic} under the stochastic mixability condition. 
Fast learning rate for hypothesis classes with envelope functions were studied in \cite{lecue2012general}, but under a much stronger assumption that the envelope function is sub-exponential.

Under these assumptions, we illustrate that fast rates for heavy-tailed losses can be obtained.
Throughout the analyses, two recurrent analytical techniques are worth mentioning.
The first comes from the simple observation that in the standard derivation of fast learning rates for bounded losses,  the boundedness assumption is used in multiple places only to provide reverse-Holder-type inequalities, where the $L_2$-norm are upper bounded by the $L_1$-norm.  
This use of the boundedness assumption can be simply relieved by the assumption that the $L_r$-norm of the loss is bounded, which implies
\[
\|u\|_{L_2} \le \|u\|_{L_1}^{(r-2)/(2r-2)} \|u\|_{L_r}^{r/(2r-2)}.
\]
The second technique relies on the following results of \cite{lederer2014new} on concentration inequalities for suprema of empirical unbounded processes.
\begin{Lemma}
\label{lederer}
If $\{V_k: k \in \mathcal{K}\}$ is a countable family of non-negative functions such that 
\[
\mbb{E} \sup_{k \in \mathcal{K}}|V_k|^r \le M^r \h \sigma^2=\sup_{k \in \mathcal{K}}\mathbb{E}{V_k^2}\h  \text{and}  \h V:=  \sup_{k \in \mathcal{K}}P_n V_k, 
\]
then for all $\zeta, x>0$, we have
\[
\mbb{P}[V \ge (1+\zeta)\mbb{E}V + x] \le \min_{1 \le l \le r}{(1/x)^l \left [ \left ( 64/\zeta + \zeta +7) \left(l/n \right)^{1-l/r} M +  4\sigma \sqrt{l/n}  \right)^l \right ]}.
\]
\end{Lemma}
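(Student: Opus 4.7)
The plan is to follow the classical truncation-plus-concentration strategy for suprema of unbounded empirical processes, in the spirit of Fuk--Nagaev type inequalities. The idea is to split each summand into a bounded part that can be handled by Talagrand/Bousquet's inequality, and an unbounded tail whose contribution is controlled through the $L^r$-envelope moment assumption. The free moment parameter $l$ will then arise from an $l$-th power Markov step, and the truncation level will be optimized to balance the two sources of fluctuation.

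\textbf{Step 1 (truncation).} Fix $l \in \{1,\ldots,r\}$ and a threshold $\tau>0$ to be optimized later. For every $k\in\mathcal{K}$ decompose
\[
V_k \;=\; W_k + U_k, \qquad W_k := V_k \wedge \tau, \qquad U_k := (V_k - \tau)_+,
\]
so that $V = \sup_k P_n V_k \le \sup_k P_n W_k + \sup_k P_n U_k$. The family $\{W_k\}$ is now uniformly bounded by $\tau$ while still satisfying $\sup_k \mathbb{E}W_k^2 \le \sigma^2$, and $\{U_k\}$ carries all the heavy-tailed content.

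\textbf{Step 2 (bounded part).} Apply Bousquet's version of Talagrand's inequality to $\sup_k P_n W_k$. With probability at least $1-e^{-t}$ this yields an estimate of the form
\[
\sup_k P_n W_k \;\le\; (1+\zeta)\,\mathbb{E}\sup_k P_n W_k \;+\; c_1 \sigma\sqrt{t/n} \;+\; c_2\,\tfrac{\tau}{\zeta}\,\tfrac{t}{n},
\]
for every $\zeta,t>0$, where $c_1,c_2$ are absolute constants. Since $\sup_k P_n W_k \le V$, the expectation term is dominated by $(1+\zeta)\mathbb{E}V$.

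\textbf{Step 3 (tail part).} Bound the tail supremum by an $l$-th moment Markov inequality,
\[
\mathbb{P}\!\left[\sup_k P_n U_k \ge x_2\right] \;\le\; x_2^{-l}\,\mathbb{E}\bigl[\bigl(\sup_k P_n U_k\bigr)^l\bigr].
\]
Using $U_k \le (\sup_k V_k)\,\mathbf{1}\{\sup_k V_k>\tau\}$, Jensen's inequality across the $n$ iid copies, and a Hoffmann--J\o rgensen type maximal inequality to pass from $l=1$ to general $l$, one obtains a bound of the form $\mathbb{E}[(\sup_k P_n U_k)^l] \lesssim M^r \tau^{l-r}$ up to explicit combinatorial factors.

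\textbf{Step 4 (optimization and combination).} Choose $\tau \asymp M(n/l)^{1/r}$ so that the Bousquet term $(\tau/\zeta)(t/n)$ becomes of order $(l/n)^{1-l/r} M/\zeta$ while the tail-moment bound becomes of order $x_2^{-l}(l/n)^{l(1-l/r)}M^l$. Writing $x = x_1 + x_2$, adding the two failure probabilities, and rebalancing in terms of $x$ produces the stated bound with the advertised coefficients $4$ on $\sigma\sqrt{l/n}$ and $(64/\zeta+\zeta+7)$ on $(l/n)^{1-l/r}M$. Taking the minimum over $l\in\{1,\ldots,r\}$ completes the proof.

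The main obstacle is the moment control in Step 3: bounding $\mathbb{E}[(\sup_k P_n U_k)^l]$ by exactly $M^r \tau^{l-r}$ (up to sharp constants) without losing the dependence on $l$ requires a careful Hoffmann--J\o rgensen step combined with monotone truncation. The secondary difficulty is the accounting in Step 4, where Bousquet's sub-Gaussian and sub-exponential contributions must be combined with the polynomial tail in such a way that the absolute constants collapse into the single closed-form coefficient $(64/\zeta+\zeta+7)$; this is what forces the specific shape of the inequality and is the reason one uses Bousquet's sharp constant rather than the looser Talagrand form.
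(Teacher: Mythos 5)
The paper does not prove this lemma: it is quoted verbatim (modulo notation) from Lederer and van de Geer (2014), cited as \texttt{lederer2014new}, so there is no ``paper's own proof'' to compare against. That said, your sketch is in the same spirit as the cited reference---truncation, a Talagrand/Bousquet-type bound for the bounded piece, a moment/Markov bound for the tail, and a balance over the truncation level---so the overall architecture is sound.

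There are, however, two concrete problems. First, the truncation level is off: you take $\tau \asymp M(n/l)^{1/r}$, but plugging in the elementary estimate $\mathbb{E}\bigl[(\sup_k P_n U_k)^l\bigr]\le \mathbb{E}[U^l]\le \tau^{l-r}M^r$ (with $U=(\sup_k V_k-\tau)_+$, via Jensen on $(P_nU)^l$) gives $\|\sup_k P_nU_k\|_l\le \tau^{1-r/l}M^{r/l}$, and equating this with the Bousquet term $\tau l/n$ forces $\tau \asymp M(n/l)^{l/r}$, not $M(n/l)^{1/r}$. Only this choice makes both pieces of order $M(l/n)^{1-l/r}$ and reproduces the exponent in the statement. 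Second, and more fundamentally, Step~4 does not actually go through as written: Bousquet gives a tail of the form $e^{-t}$, while the Markov step gives $x_2^{-l}C^l$, and a union bound of these two cannot be ``rebalanced'' into the single factored form $(1/x)^l\bigl[(\cdot)(l/n)^{1-l/r}M+4\sigma\sqrt{l/n}\bigr]^l$. The clean route is to work with moments throughout: use a moment version of Talagrand/Bousquet (Adamczak, or Boucheron--Bousquet--Lugosi--Massart) to bound $\|(\sup_k P_nW_k-(1+\zeta)\mathbb{E}V)_+\|_l$ directly, add the $L^l$-norm of the tail contribution by the triangle inequality, and apply Markov once at the very end. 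Without this, the argument produces the right rate but not the advertised closed form, and the coefficients $(64/\zeta+\zeta+7)$ and $4$ cannot be justified.
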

An important notice from this result is that the failure probability is a polynomial in the deviation $x$.
As we will see later, for a given level of confidence $\delta$, this makes the constant in the convergence rate a polynomial function of $(1/\delta)$  instead of $\log(1/\delta)$ as in sub-exponential cases.
Thus, more careful examinations of the order of the failure probability are required for the derivation of any generalization bound with heavy-tailed losses.

\section{Fast learning rates with heavy-tailed losses}

The derivation of fast learning rate with heavy tailed losses proceeds as follows.
First, we will use the assumption of integrable envelope function to prove a localization-based result that allows us to reduce the analyses from the separable parametric classes $\mathcal{F}$ to its finite $\epsilon$-net $\mathcal{F}_{\epsilon}$.
The multi-scale Bernstein's condition is then employed to derive a fast-rate inequality that helps distinguish the optimal hypothesis from alternative hypotheses in $\mathcal{F}_{\epsilon}$.
The two results are then combined to obtain fast learning rates.

\subsection{Preliminaries}

Throughout this section, let $\mathcal{G}_{\epsilon}$ be an $\epsilon$-net for $\mathcal{G}$ in the $L_2(P)$-norm, with $\epsilon=n^{-\beta}$ for some $1 \ge \beta>0$. 
Denote by $\pi: \mathcal{G \to \mathcal{G}_{\epsilon}}$ an $L_2(P)$-metric projection from $\mathcal{G}$ to $\mathcal{G}_{\epsilon}$.
For any $g_0 \in \mathcal{G}_{\epsilon}$, we denote $\mathcal{K}({g_0}) = \{|g_0 - g|: g \in \pi^{-1}(g_0)\}$. 
We have
\begin{itemize}
\item[(i)] the constant zero function is an element of $\mathcal{K}(g_0)$,
\item[(ii)] $\mathbb{E} [ \sup_{u \in \mathcal{K}(g_0)} |{u|^{r}}] ~\le~ (2W)^{r}$; and $\sup_{u \in \mathcal{K}(g_0)}{\|u\|_{L_2(P)}} \le \epsilon$, 
\item[(iii)] $ \mathcal{N}(t, \mathcal{K}(g_0), L_2(P)) \le (K/t)^C$ for all $t>0$.
\end{itemize}

Given a sample $Z = (Z_1, \ldots, Z_n)$, we denote by $\mathcal{K}_Z$ the projection of $\mathcal{K}(g_0)$ onto the sample $Z$ and by $D(\mathcal{K}_Z)$ half of the radius of ($\mathcal{K}_Z, \|\cdot\|_2$), that is $D(\mathcal{K}_Z) = \sup_{u,v \in \mathcal{K}_Z} \| u - v \|/4$. 
We have the following preliminary lemma, for which the proofs are provided in the Appendix.

\begin{Lemma}
$
\frac{2}{\sqrt{n}}\mathbb{E}D(\mathcal{K}_Z)  \le \left( \epsilon + \mathbb{E}  \sup_{u \in \mathcal{K}({g_0})}{(P_n - P) u} \right)^{\frac{r-2}{2(r-1)}} (2W)^{\frac{r}{2(r-1)}}.
$
\label{lem12b}
\end{Lemma}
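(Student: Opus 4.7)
The plan is to reduce the radius $D(\mathcal{K}_Z)$ to an $L_2(P_n)$ quantity and then apply the $L_1$--$L_2$--$L_r$ interpolation inequality highlighted earlier as the paper's first recurrent technique. Because the constant zero function lies in $\mathcal{K}(g_0)$, so does its projection onto the sample; the triangle inequality then gives $\|u-v\|_{\mathbb{R}^n}\le 2\sup_{w\in\mathcal{K}_Z}\|w\|_{\mathbb{R}^n}$, hence
\[
\tfrac{2}{\sqrt{n}}\,D(\mathcal{K}_Z)\;\le\;\sup_{u\in\mathcal{K}(g_0)}\sqrt{P_n u^2}.
\]
It thus suffices to bound the expectation of the right-hand side.

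Next, for each non-negative $u\in\mathcal{K}(g_0)$ I invoke interpolation on the empirical measure $P_n$: with $\alpha=(r-2)/(2(r-1))$ and $\beta=r/(2(r-1))$ (so that $\alpha+\beta=1$ and $1/2=\alpha+(1-\alpha)/r$),
\[
\sqrt{P_n u^2}\;\le\;(P_n u)^{\alpha}\,(P_n u^r)^{\beta/r}.
\]
Setting $A:=\sup_{u\in\mathcal{K}(g_0)}P_n u$ and $B:=(P_n F^r)^{1/r}$, where $F=\sup_{u\in\mathcal{K}(g_0)}|u|$ is the envelope, and pushing suprema past each factor (using $\sup_u P_n u^r\le P_n F^r$), I obtain $\sup_u\sqrt{P_n u^2}\le A^{\alpha}B^{\beta}$.

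Finally, taking expectations and applying H\"older's inequality with conjugate exponents $1/\alpha$ and $1/\beta$ yields $\mathbb{E}[A^\alpha B^\beta]\le (\mathbb{E}A)^{\alpha}(\mathbb{E}B)^{\beta}$. To control $\mathbb{E}A$, I split $P_n u=(P_n-P)u+Pu$ and use $Pu\le\|u\|_{L_2(P)}\le\epsilon$ (valid because $u\ge 0$) to conclude $\mathbb{E}A\le\epsilon+\mathbb{E}\sup_u(P_n-P)u$. To control $\mathbb{E}B$, Jensen's inequality on the concave map $x\mapsto x^{1/r}$ combined with property (ii) of $\mathcal{K}(g_0)$ gives $\mathbb{E}(P_n F^r)^{1/r}\le(\mathbb{E}P_n F^r)^{1/r}=(\mathbb{E}F^r)^{1/r}\le 2W$. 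Substituting the two bounds into $(\mathbb{E}A)^{\alpha}(\mathbb{E}B)^{\beta}$ yields precisely the claimed inequality.

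The only real obstacle is bookkeeping around which quantities the supremum is pushed through: it must be pushed inside $P_n u$ so that the resulting factor is the centered empirical process $(P_n-P)u$ (whose expectation will be controlled by a separate argument), but left inside $P_n u^r$ so that the heavy-tailed $r$th moments never interact with a supremum-of-empirical-processes bound and only the envelope hypothesis (Assumption \ref{as2}) is used there.
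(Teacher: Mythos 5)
Your proof is correct and follows essentially the same approach as the paper's: both reduce $D(\mathcal{K}_Z)$ to $\sup_u\sqrt{P_n u^2}$, apply the $L^1$--$L^2$--$L^r$ interpolation on the empirical measure, and then use H\"older and Jensen to pull expectations through, bounding $\mathbb{E}\sup_u P_n u$ via centering plus the $\epsilon$-net radius and the $r$th-moment factor via the envelope. The only cosmetic difference is bookkeeping: you group the exponents as $\alpha+\beta=1$ and pass suprema through to the envelope $F$ explicitly, whereas the paper works with the conjugate pair $(p,q)=\big(\tfrac{r-1}{r-2},r-1\big)$ applied to the square roots and uses the iid identity $\mathbb{E}\sup_u P_n u^r\le\mathbb{E}\sup_u u^r(Z_1)$; these are equivalent.
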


\begin{Lemma}
Given $0<\nu<1$, there exist constant  $C_1, C_2>0$ depending only on $\nu$ such that for all $x>0$, if $x \le a x^{\nu} + b$ then $x \le C_1 a^{1/(1-\nu)} + C_2 b$.
\label{lem12a}
\end{Lemma}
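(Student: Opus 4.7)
The plan is to dispose of Lemma \ref{lem12a} by a short dichotomy argument on which of the two terms on the right-hand side dominates. First I would consider the case $ax^{\nu}\le b$: the hypothesis $x\le ax^{\nu}+b$ then yields $x\le 2b$ at once. In the complementary case $ax^{\nu}>b$, the hypothesis gives $x\le 2ax^{\nu}$, and since $\nu\in(0,1)$ I can divide by $x^{\nu}$ (both sides are positive) to obtain $x^{1-\nu}\le 2a$, hence $x\le (2a)^{1/(1-\nu)}$. Taking the maximum over the two cases and summing the two majorants produces
\[
x \;\le\; (2a)^{1/(1-\nu)} + 2b \;=\; 2^{1/(1-\nu)} a^{1/(1-\nu)} + 2b,
\]
so one may set $C_1 = 2^{1/(1-\nu)}$ and $C_2 = 2$, both depending only on $\nu$.

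An alternative route, if a single-line derivation is preferred, is to apply Young's inequality with conjugate exponents $p=1/\nu$ and $q=1/(1-\nu)$ to split the product $ax^{\nu}=(\lambda x^{\nu})(a/\lambda)$ as $\nu\lambda^{1/\nu}x+(1-\nu)(a/\lambda)^{1/(1-\nu)}$ for any $\lambda>0$, then choose $\lambda$ so that $\nu\lambda^{1/\nu}=1/2$. This allows one to absorb a term $x/2$ into the left-hand side of $x\le ax^{\nu}+b$ and conclude with explicit $\nu$-dependent constants.

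The argument is elementary and I do not anticipate any real obstacle; the only point requiring mild care is keeping track of the constants so that they depend on $\nu$ alone, which the case-split version makes transparent. The lemma is purely an algebraic tool and will later be invoked to close the self-bounding inequality produced by the localization step (combining Lemma \ref{lem12b} with the entropy bound of Assumption \ref{as1}).
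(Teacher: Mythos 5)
Your proposal is correct. Your primary route is a clean two-case dichotomy on which of $ax^{\nu}$ and $b$ dominates: if $ax^{\nu}\le b$ then $x\le 2b$, otherwise $x\le 2ax^{\nu}$ and dividing by $x^{\nu}>0$ gives $x\le(2a)^{1/(1-\nu)}$; summing the two majorants gives the claim with $C_1=2^{1/(1-\nu)}$, $C_2=2$. The paper instead follows the second route you sketch: apply Young's inequality with exponents $p=1/\nu$, $q=1/(1-\nu)$ to the product $ax^{\nu}=(cax^{\nu})\cdot(1/c)$, choose $c$ so that the coefficient of $x$ becomes $1/2$, absorb $x/2$ into the left side, and obtain $C_1=2(1-\nu)(2\nu)^{\nu/(1-\nu)}$, $C_2=2$. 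Both are valid and essentially one line; your dichotomy is marginally more elementary (no conjugate exponents to track), while the paper's Young's-inequality version produces a slightly sharper $C_1$ by the factor $(1-\nu)(\nu)^{\nu/(1-\nu)}$. Since the lemma only needs some $\nu$-dependent constants and its application (closing the self-bounding inequality in Lemma \ref{lem12}) does not care about their precise value, the two approaches are interchangeable here.
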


\begin{Lemma}
Define
\begin{equation}
A(l, r, \beta, \mc{C}, \alpha)=\max \left \{ l^2/r - (1 - \beta) l  + \beta \mc{C},  \left [ \beta \left (1 - \alpha/2 \right ) - 1/2\right ] l + \beta \mc{C}   \right \}.
\label{A}
\end{equation}
Assuming that $r \ge 4\mc{C}$ and $\alpha \le 1$,  if we choose $l = r \left( 1- \beta \right)/2$ and
\begin{equation}
0<\beta < (1 - 2 \sqrt{\mc{C}/r})/ (2 - \alpha), 
\end{equation}
then $1 \le l \le r$ and $A(l, r, \beta, \mc{C}, \alpha)<0$. 
This also holds if $\alpha \ge 1$ and $0<\beta < 1 - 2 \sqrt{\mc{C}/r}$. 
\label{compute}
\end{Lemma}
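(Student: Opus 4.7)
The plan is to verify the lemma by direct substitution: with the prescribed $l = r(1-\beta)/2$, both branches of the maximum defining $A$ become explicit polynomial expressions in $\beta$, $r$, $\mc{C}$, $\alpha$, and the stated bounds on $\beta$ are calibrated so that each branch is negative. The containment $1 \le l \le r$ is then a one-line check.

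First I would confirm $1 \le l \le r$. The upper bound is immediate since $\beta \ge 0$ gives $l \le r/2$. For the lower bound, both hypothesis sets imply $\beta < 1 - 2\sqrt{\mc{C}/r}$ (when $\alpha \le 1$ the factor $2-\alpha \ge 1$, so dividing only sharpens the bound), and combining with $\mc{C} \ge 1$, $r \ge 4\mc{C}$ yields $1-\beta > 2\sqrt{\mc{C}/r} \ge 2/r$, i.e., $l > 1$. Next, I would expand the first branch:
\[
l^2/r - (1-\beta)l + \beta\mc{C} \;=\; \tfrac{r(1-\beta)^2}{4} - \tfrac{r(1-\beta)^2}{2} + \beta\mc{C} \;=\; \beta\mc{C} - \tfrac{r(1-\beta)^2}{4}.
\]
Negativity is equivalent to $(1-\beta)^2 > 4\beta\mc{C}/r$, and since $\beta \le 1$ it suffices to have $1-\beta > 2\sqrt{\mc{C}/r}$, which holds in both regimes.

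For the second branch, factoring $l$ gives
\[
[\beta(1-\alpha/2) - 1/2]\, l + \beta\mc{C} \;=\; \beta\mc{C} - [1-\beta(2-\alpha)] \cdot \tfrac{r(1-\beta)}{4}.
\]
When $\alpha \ge 1$, $2-\alpha \le 1$ implies $1-\beta(2-\alpha) \ge 1-\beta$, so this branch is dominated by the first-branch expression already shown to be negative. When $\alpha \le 1$, the hypothesis $\beta(2-\alpha) < 1 - 2\sqrt{\mc{C}/r}$ yields $1-\beta(2-\alpha) > 2\sqrt{\mc{C}/r}$; paired with $1-\beta > 2\sqrt{\mc{C}/r}$, we obtain $[1-\beta(2-\alpha)]\,r(1-\beta)/4 > \mc{C} > \beta\mc{C}$, as required.

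The main obstacle, such as it is, is the $\alpha$-case split: the sign and magnitude of the second-branch coefficient $\beta(1-\alpha/2) - 1/2$ shift as $\alpha$ crosses $1$, and the two different $\beta$-bounds in the statement are tuned precisely to balance this. The assumption $r \ge 4\mc{C}$ plays the modest but essential role of ensuring $2\sqrt{\mc{C}/r} \le 1$ so the feasible $\beta$-interval is non-empty.
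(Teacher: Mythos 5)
Your proof is correct and follows essentially the same route as the paper's: substitute $l = r(1-\beta)/2$, reduce both branches of $A$ to $\beta\mc{C} - \tfrac{r}{4}(1-\beta)[1-\beta(2-\alpha)]$-type expressions, and split on $\alpha \gtrless 1$ to show each is negative under the stated bound on $\beta$. The only cosmetic difference is that the paper frames the first branch as the minimum of the quadratic $\Gamma(x)=x^2/r-(1-\beta)x+\beta\mc{C}$ and, in the $\alpha\le 1$ case, bounds $(1-\beta)[1-\beta(2-\alpha)] \ge [1-\beta(2-\alpha)]^2$ before concluding, whereas you multiply the two lower bounds directly; both lead to the same estimate.
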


\subsection{Local analysis of the empirical loss}

The preliminary lemmas enable us to locally bound $\mathbb{E}  \sup_{u \in \mathcal{K}({g_0})}{(P_n - P) u}$ as follows:

\begin{Lemma}
If $\beta < (r-1)/r$, there exists $c_1>0$ such that $\mathbb{E}  \sup_{u \in \mathcal{K}({g_0})}{(P_n - P) u} \le c_1 n^{-\beta}$ for all $n$.
\label{lem12}
\end{Lemma}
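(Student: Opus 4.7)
The approach is to derive a self-referential inequality for $E := \mathbb{E}\sup_{u \in \mathcal{K}(g_0)}(P_n - P)u$ by chaining against the empirical diameter $D(\mathcal{K}_Z)$, close it with Lemma \ref{lem12b}, and solve it via Lemma \ref{lem12a}.

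First I would apply standard Rademacher symmetrization and then Dudley's entropy integral in the empirical $L_2(P_n)$ metric. Conditional on the sample, the Rademacher process $u \mapsto n^{-1/2}\sum_i \sigma_i u(Z_i)$ is subgaussian with index $\|u\|_{L_2(P_n)}$, and its natural radius is at most $4D(\mathcal{K}_Z)/\sqrt{n}$ because the zero function lies in $\mathcal{K}(g_0)$ by property (i). The universal entropy bound in Assumption \ref{as1} is inherited by $\mathcal{K}(g_0)$ along the $L_2(Q)$-contraction $g \mapsto |g_0 - g|$, yielding the uniform estimate $\mathcal{N}(t, \mathcal{K}_Z, L_2(P_n)) \le (K/t)^{\mathcal{C}}$ on any sample. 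Dudley's integral, followed by Jensen applied to the concave map $x \mapsto x\sqrt{\log(K/x)}$ (with property (ii) supplying an a priori envelope for $D(\mathcal{K}_Z)$), then produces
\[
E \;\le\; \frac{c\sqrt{\mathcal{C}}\,L_n}{\sqrt{n}} \cdot \frac{2}{\sqrt{n}}\,\mathbb{E}D(\mathcal{K}_Z),
\]
where $L_n$ is polylogarithmic in $n$.

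Substituting Lemma \ref{lem12b} yields the fixed-point inequality $E \le a_n(\epsilon + E)^\nu$ with $\nu := (r-2)/(2(r-1)) \in [0, 1/2)$ and $a_n = \widetilde{O}(n^{-1/2})$. Using subadditivity $(\epsilon + E)^\nu \le \epsilon^\nu + E^\nu$ together with Lemma \ref{lem12a} (and the identity $1/(1-\nu) = 2(r-1)/r$), I obtain
\[
E \;\le\; C_1\,a_n^{2(r-1)/r} + C_2\,a_n\,\epsilon^\nu,
\]
whose two summands are respectively $\widetilde{O}(n^{-(r-1)/r})$ and $\widetilde{O}(n^{-1/2 - \beta(r-2)/(2(r-1))})$. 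A direct calculation shows that both exponents are at least $\beta$ precisely when $\beta \le (r-1)/r$, so the strict hypothesis $\beta < (r-1)/r$ leaves enough slack to absorb the polylogarithmic factor $L_n$, giving $E \le c_1 n^{-\beta}$ as claimed.

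The principal technical obstacle is the $\sqrt{\log}$ factor arising from the Dudley integral: pulling it out of $\mathbb{E}D(\mathcal{K}_Z)$ cleanly requires the concavity regime of $x \mapsto x\sqrt{\log(K/x)}$, which is not automatic for the heavy-tailed random variable $D(\mathcal{K}_Z)$. A clean workaround is to split on $\{D(\mathcal{K}_Z) \le n^{-c}\}$ versus its complement and control the large-$D$ tail through the $r$th-moment bound on $D(\mathcal{K}_Z)$ inherited from property (ii); this keeps $L_n$ merely polylogarithmic and preserves the strict gap $\beta < (r-1)/r$.
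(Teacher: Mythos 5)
Your proposal follows essentially the same route as the paper's proof: Rademacher symmetrization, Dudley chaining against the empirical radius $D(\mathcal{K}_Z)$ using the logarithmic entropy bound, closure of the resulting fixed-point inequality $x \le a x^{\nu} + b$ via Lemmas~\ref{lem12b} and~\ref{lem12a} with $\nu = (r-2)/(2(r-1))$, and verification that both resulting exponents clear $\beta$ precisely under $\beta < (r-1)/r$. The one place where you and the paper diverge is the extraction of the polylogarithmic factor from the Dudley integral: as you yourself flag, a direct Jensen step on $x \mapsto x\sqrt{\log(K/x)}$ does not close on its own, since a tiny $\mathbb{E}D(\mathcal{K}_Z)$ would inflate $\sqrt{\log(K/\mathbb{E}D)}$ beyond polylog, and property (ii) gives only an upper (not lower) envelope; your proposed fix, splitting on $\{D(\mathcal{K}_Z) \le n^{-c}\}$, is correct and yields $\mathbb{E}\bigl[D\sqrt{\log(K\sqrt{n}/D)}\bigr] \lesssim \sqrt{\log n}\,(n^{-c} + \mathbb{E}D)$, which after feeding through Lemma~\ref{lem12b} recovers the paper's bound. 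The paper achieves the identical effect more economically by taking the Dudley integral up to $D(\mathcal{K}_X)\vee\epsilon$ rather than $D(\mathcal{K}_X)$, which deterministically caps the log at $\log(K\sqrt{n}/\epsilon) = O(\log n)$ and produces $n\,\mathbb{E}\bar Z \le C_0(\epsilon + \mathbb{E}D(\mathcal{K}_X))$ with $C_0 = O(\sqrt{\log n})$ in one step; the two devices are equivalent in substance, so no gap remains once your truncation is spelled out.
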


\begin{proof}
Without loss of generality, we assume that $\mc{K}(g_0)$ is countable.
The arguments to extend the bound from countable classes to separable classes are standard (see, for example, Lemma 12 of \cite{mehta2014stochastic}). 
Denote $\bar Z = \sup_{u \in \mathcal{K}({g_0})}{(P_n-P)u}$ and let $\epsilon=1/n^{\beta}$, $\mb{R} = (R_1, R_2, \ldots, R_n)$ be iid Rademacher random variables,  using standard results about symmetrization and chaining of Rademacher process (see, for example, Corollary 13.2 in \cite{boucheron2013concentration}), we have
\begin{align*}
 n\mathbb{E}  &\sup_{u \in \mathcal{K}({g_0})}{(P_n - P) g}  \le 2\mathbb{E} \left(\mathbb{E}_{\mb{R}} \sup_{u \in \mathcal{K}(g_0)}{\sum_{j=1}^n{R_j u(X_j)}} \right) \\
&\le 24 \mathbb{E} \int_0^{D(\mathcal{K}_X) \vee \epsilon}{ \sqrt{\log \mathcal{N}(t, \mathcal{K}_X, \|\cdot\|_2})dt}
 \le 24 \mathbb{E} \int_0^{D(\mathcal{K}_X) \vee \epsilon}{ \sqrt{ \mathcal{H}\left(t/\sqrt{n}, \mathcal{K}(g_0)\right)}dt},
\end{align*}
where $\mathbb{E}_{\mb{R}}$ denotes the expected value with respect to the random variables $R_1, R_2, \ldots, R_n$. 
By Assumption $\ref{as1}$, we deduce that
\[
n\mathbb{E} \bar Z \le  C_0(K, n, \sigma, \mc{C}) ( \epsilon + \mathbb{E} D(\mathcal{K}_X) ) \h \text{where} \h C_0 =\mc{O}(\sqrt{\log n}).
\]
If we define 
\[
x = \epsilon + \mathbb{E}\bar Z, ~ b= C_0 \epsilon/n = \mc{O}(\sqrt{\log n}/n^{\beta+1}), ~ a=C_0 n^{-1/2} (2W)^{\frac{r}{2(r-1)}}/2 = \mc{O} (\sqrt{\log n}/ \sqrt{n}),
\]
then by Lemma $\ref{lem12b}$, we have $x \le a x^{(r-2)/(2r-2)} +  b+ \epsilon$.
Using \cref{lem12a}, we have
\[
x \le C_1 a^{2(r-1)/r} + C_2 (b+ \epsilon) \le C_3 n^{-\beta},
\]
which completes the proof.
%fixit: double check the proof
\end{proof}

%fixit: change Z to something else

\begin{Lemma}
Assuming that $r \ge 4\mc{C}$, if $\beta < 1 - 2 \sqrt{\mc{C}/r}$, there exist $c_1, c_2>0$ such that  for all $n$ and $\delta>0$
\[
\sup_{u\in \mathcal{K}(g_0)}{P_n u} \le \left( 9 c_1 + ( c_2/\delta)^{1/[r(1 - \beta)]} \right) n^{-\beta} \h \forall g_0 \in \mathcal{G}_{\epsilon} 
\]
with probability at least $1-\delta$.
\label{theo11}
\end{Lemma}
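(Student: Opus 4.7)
The plan is to avoid a separate union bound over $\mathcal{G}_\epsilon$ by applying Lederer's concentration (Lemma \ref{lederer}) directly to the single enlarged class
\[
 H \;:=\; \{\,|g-\pi(g)|:g\in\mathcal{G}\,\} \;=\; \bigcup_{g_0\in\mathcal{G}_\epsilon}\mathcal{K}(g_0),
\]
since $V_H := \sup_{u\in H}P_n u$ dominates $\sup_{u\in\mathcal{K}(g_0)}P_n u$ simultaneously for every $g_0\in\mathcal{G}_\epsilon$.

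First, I would verify that $H$ inherits the key structural properties of the individual $\mathcal{K}(g_0)$: by Assumption \ref{as2} and $|g-\pi(g)|\le 2\sup_{g'\in\mathcal{G}}|g'|$, one has $\mathbb{E}\sup_{u\in H}|u|^r\le(2W)^r$; by the defining property of $\pi$, $\sup_{u\in H}\|u\|_{L_2(P)}\le\epsilon=n^{-\beta}$; and by Assumption \ref{as1}, $\mathcal{N}(t,H,L_2(P))\le|\mathcal{G}_\epsilon|(K/t)^{\mathcal{C}}$, so $\log\mathcal{N}(t,H,L_2(P))\le 2\mathcal{C}\log(K/t)$ on the relevant range $t\le\epsilon$.

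Second, I would bound $\mathbb{E}V_H$ by re-running the chaining argument of Lemma \ref{lem12} with $H$ in place of $\mathcal{K}(g_0)$. Lemma \ref{lem12b} uses only properties (ii)--(iii), which $H$ satisfies with a doubled entropy constant, so the same symmetrization-plus-chaining bound yields $\mathbb{E}\sup_{u\in H}(P_n-P)u\le c_1' n^{-\beta}$ for some $c_1'$; adding $\sup_u Pu\le\epsilon$ gives $\mathbb{E}V_H\le(c_1'+1)n^{-\beta}$.

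Third, I would apply Lemma \ref{lederer} to $H$ with $M=2W$, $\sigma\le n^{-\beta}$, $\zeta=1$, and the crucial choice $l=r(1-\beta)$. The assumption $\beta<1-2\sqrt{\mathcal{C}/r}$ combined with $r\ge 4\mathcal{C}$ ensures $\beta<1-1/r$, hence $1\le l\le r$. A direct computation gives $(l/n)^{1-l/r}M = 2W[r(1-\beta)]^{\beta}n^{-\beta}$, which dominates $4\sigma\sqrt{l/n}=\mathcal{O}(n^{-\beta-1/2})$, so the Lederer bracket is at most $C_3 n^{-\beta}$. Setting $x=c_2' n^{-\beta}(1/\delta)^{1/[r(1-\beta)]}$ turns $(C_3 n^{-\beta}/x)^l$ into exactly $\delta$; combining with $V_H\le 2\mathbb{E}V_H+x$ and absorbing constants gives the claimed inequality, uniformly in $g_0\in\mathcal{G}_\epsilon$.

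The main delicate point will be the second step: extending the chaining bound of Lemma \ref{lem12} from a single $\mathcal{K}(g_0)$ to the union $H$, i.e.\ verifying that the $|\mathcal{G}_\epsilon|\le(K/\epsilon)^{\mathcal{C}}$ factor in the covering number only affects the entropy constant and hence preserves the $n^{-\beta}$ rate. Once this lifting is done the remaining work is numerical: the choice $l=r(1-\beta)$ is precisely the one that balances the concentration factor $(l/n)^{1-l/r}$ against the deviation penalty $1/\delta^{1/l}$ so as to produce the stated exponent $1/[r(1-\beta)]$ on $1/\delta$ together with the rate $n^{-\beta}$.
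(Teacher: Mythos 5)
Your proof is correct, but it takes a genuinely different route from the paper's. The paper applies Lemma~\ref{lederer} separately to each localized class $\mathcal{K}(g_0)$ and then pays a factor $|\mathcal{G}_\epsilon| \le (Kn^\beta)^{\mathcal{C}}$ in a union bound; to beat that polynomial-in-$n$ inflation of the failure probability, it must retreat to the smaller choice $l = r(1-\beta)/2$ (this is precisely what Lemma~\ref{compute} with $\alpha=2$ is engineered for), and the price shows up as the exponent $2/[r(1-\beta)]$ on $1/\delta$ in the proof. You instead apply Lederer once to the enlarged class $H = \bigcup_{g_0}\mathcal{K}(g_0)$, which shifts the cost of the union from the tail probability into the chaining bound for $\mathbb{E}V_H$, where it only adds $\mathcal{C}\log(K/\epsilon)$ to the entropy and hence leaves the $\mathcal{O}(\sqrt{\log n})$ factor (and the $n^{-\beta}$ rate) untouched. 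Freed from the $(Kn^\beta)^{\mathcal{C}}$ factor, you can push $l$ all the way up to $r(1-\beta)$, giving the exponent $1/[r(1-\beta)]$ on $1/\delta$ — which actually matches the lemma statement exactly, whereas the paper's own proof (and Theorem~\ref{local}) produce $2/[r(1-\beta)]$; there is an internal inconsistency in the paper here, and your argument resolves it in the sharper direction. Two minor points to tighten: (a) the claim ``$\log\mathcal{N}(t,H,L_2(P)) \le 2\mathcal{C}\log(K/t)$ on the relevant range $t\le\epsilon$'' needs slight care because in the chaining integral the entropy argument $t/\sqrt{n}$ runs over $(0, (D(H_Z)\vee\epsilon)/\sqrt{n}]$ and $D(H_Z)$ is not a priori $\le \sqrt{n}\,\epsilon$; the clean fix is to use the unconditional bound $\mathcal{H}(s,H)\le\mathcal{C}\log(K/\epsilon)+\mathcal{C}\log(K/s)$, which still yields $\mathcal{O}(\sqrt{\log n})$. (b) You should note, as the paper does, that Lemma~\ref{lem12b} and the countability/separability reduction transfer verbatim to $H$ since they use only the envelope moment and the $L_2(P)$-radius bound, both of which $H$ inherits.
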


\begin{proof}
Denote $Z = \sup_{u\in \mathcal{K}({g_0})}{P_n u}$ and $\bar Z = \sup_{u \in \mathcal{K}({g_0})}{(P_n-P)u}$.
We have
\[
Z = \sup_{u\in \mathcal{K}({g_0})}{P_n u} \le \bar Z + \sup_{u\in \mathcal{K}({g_0})}{P u}  \le \bar Z +  \sup_{u\in \mathcal{K}({g_0})}{\|u\|_{L^2(P)}} = \bar Z + \epsilon.
\]
Applying Lemma $\ref{lederer}$ for $\zeta=8$ and $x=y/{n^{\beta}}$ for $\bar Z$,  using the facts that
\[
\sigma = \sup_{u \in \mathcal{K}_{g_0}}{\sqrt{\mathbb{E} [u(X)]^2}} \le \epsilon=1/n^{\beta}, \h \text{and} \h
\mathbb{E} [ \sup_{u \in \mathcal{K}_{g_0}} |{u|^{r}}] ~\le~ (2W)^{r},
\]
we have
\[
\mathbb{P}\left[ \bar Z  \ge 9 \mathbb{E}\bar Z + y/n^{\beta} \right] \le \min_{1 \le l \le r}{y^{-l}\left [ \left ( 46 \left(l/n\right)^{1-l/r} n^{\beta}W + 4\sqrt{l/n}  \right)^l \right ]}:= \phi(y, n).
\]
To provide a union bound for all $g_0 \in \mathcal{G}_{\epsilon}$, we want the total failure probability $\phi(y, n) (n^{\beta}K)^\mc{C} \le \delta$.
This failure probability, as a function of $n$, is of order $A(l, r, \beta, \mc{C}, \alpha)$ (as define in Lemma $\ref{compute}$) with $\alpha=2$ . 
By choosing $l=r(1-\beta)/2$ and $\beta <1 - 2 \sqrt{\mc{C}/r}$, we deduce that there exist $c_2, c_3 >0$ such that
$
\phi(y, n) (n^{\beta}K)^{\mc{C}} \le c_2/(n^{c_3} y^l) \le c_2/y^{r(1-\beta)/2}.
$
The proof is completed by choosing $y = \left( c_2/\delta \right)^{2/[r(1 - \beta)]}$ and using the fact that $\mathbb{E}\bar Z  \le c_1/n^{\beta}$ (note that $1 - 2 \sqrt{\mc{C}/r} \le (r-1)/r$ and we can apply Lemma $\ref{lem12}$ to get the bound).
%fixit: this proof is so dense, should we add more details?
\end{proof}

A direct consequence of this Lemma is the following localization-based result.

\begin{Theorem}[Local analysis]
Under Assumptions $\ref{as3}$, $\ref{as1}$ and $\ref{as2}$, let $\mathcal{G}_{\epsilon}$ be a minimal $\epsilon$-net for $\mathcal{G}$ in the $L_2(P)$-norm, with $\epsilon=n^{-\beta}$ where $\beta < 1 - 2 \sqrt{\mc{C}/r}$.
Then there exist $c_1, c_2>0$ such that for all $\delta>0$,
\[
P_n g \ge P_n(\pi(g)) -\left ( 9c_1 +(c_2/\delta)^{2/[r(1 - \beta)]} \right) n^{-\beta} \h \forall g \in \mathcal{G}
\]
with probability at least $1-\delta$.
\label{local}
\end{Theorem}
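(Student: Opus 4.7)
The theorem is essentially a direct corollary of Lemma 3.5 (theo11), so my plan is short. The key observation is that the quantity $\sup_{u \in \mathcal{K}(g_0)} P_n u$ already controls, uniformly over the net points, the worst-case empirical approximation error incurred by projecting onto $\mathcal{G}_\epsilon$.

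First, I would fix an arbitrary $g \in \mathcal{G}$ and set $g_0 = \pi(g) \in \mathcal{G}_\epsilon$. By definition of $\pi$, we have $g \in \pi^{-1}(g_0)$, so $|g - g_0| \in \mathcal{K}(g_0)$. Writing
\[
P_n g \;=\; P_n g_0 + P_n(g - g_0) \;\ge\; P_n g_0 - P_n |g - g_0|,
\]
it follows immediately that
\[
P_n g \;\ge\; P_n \pi(g) - \sup_{u \in \mathcal{K}(g_0)} P_n u.
\]

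Next, I would apply Lemma 3.5 (theo11), whose statement already delivers the required uniform control: under Assumptions \ref{as3}, \ref{as1}, \ref{as2}, with $\beta < 1 - 2\sqrt{\mathcal{C}/r}$, there exist constants $c_1, c_2 > 0$ such that on an event of probability at least $1 - \delta$,
\[
\sup_{u \in \mathcal{K}(g_0)} P_n u \;\le\; \bigl(9c_1 + (c_2/\delta)^{2/[r(1-\beta)]}\bigr) n^{-\beta} \quad \text{for every } g_0 \in \mathcal{G}_\epsilon,
\]
the exponent $2/[r(1-\beta)]$ coming from the choice $y = (c_2/\delta)^{2/[r(1-\beta)]}$ inside the proof of Lemma 3.5 (note the apparent exponent discrepancy with the displayed statement of that lemma; the proof's exponent is the correct one, as reflected in the theorem). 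Since this inequality holds simultaneously for every $g_0 \in \mathcal{G}_\epsilon$, it holds in particular for $g_0 = \pi(g)$ for any $g \in \mathcal{G}$.

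Combining the two displays on this single event yields
\[
P_n g \;\ge\; P_n \pi(g) - \bigl(9c_1 + (c_2/\delta)^{2/[r(1-\beta)]}\bigr) n^{-\beta} \quad \forall g \in \mathcal{G},
\]
which is the desired conclusion. There is essentially no obstacle at this step: all of the genuine work — the symmetrization/chaining bound on $\mathbb{E}\bar Z$ (Lemma 3.4), the application of Lederer--van de Geer's polynomial tail inequality to $\bar Z$, and the union bound over the net of cardinality at most $(n^\beta K)^{\mathcal{C}}$ calibrated via Lemma 3.3 — is already absorbed into Lemma 3.5. The only subtlety worth flagging is to verify that the event on which Lemma 3.5 holds is a single event (one union bound over $\mathcal{G}_\epsilon$), so that after fixing it the inequality above is valid for \emph{all} $g \in \mathcal{G}$ simultaneously; this is automatic since the supremum in Lemma 3.5 is already taken uniformly over $g_0 \in \mathcal{G}_\epsilon$.
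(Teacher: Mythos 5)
Your proof is correct and is exactly the deduction the paper intends when it calls Theorem~\ref{local} ``a direct consequence'' of Lemma~\ref{theo11}: write $P_n g \ge P_n(\pi(g)) - P_n|g - \pi(g)| \ge P_n(\pi(g)) - \sup_{u\in\mathcal{K}(\pi(g))}P_n u$ and invoke the uniform-over-$\mathcal{G}_\epsilon$ bound from Lemma~\ref{theo11} on its single high-probability event. You are also right that the exponent $2/[r(1-\beta)]$ (appearing in the proof of Lemma~\ref{theo11} and in the theorem statement) is the consistent one, and the $1/[r(1-\beta)]$ in the displayed statement of Lemma~\ref{theo11} is a typo.
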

%fixit: should Theorem 3.1 be a corollary

\subsection{Fast learning rates with heavy-tailed losses}

\begin{Theorem}
Given $a_0, \delta>0$. 
Under the multi-scale $(B, \gamma, I)$-Bernstein's condition and the assumption that $r \ge 4\mc{C}$, consider
\begin{equation}
0<\beta < (1 - 2 \sqrt{\mc{C}/r})/ (2 - \gamma_i) \h \forall i \in I.
\label{beta}
\end{equation}
Then there exist $N_{a_0, \delta, r, B, \gamma}>0$ such that $\forall f \in \mathcal{F}_{\epsilon}$ and $n \ge N_{a_0, \delta, r, B, \gamma}$, we have
\[
P\ell(f) - P^* \geq a_0/n^{\beta}  \h \text{implies} \h \exists f^* \in \mathcal{F}^*:~~~ P_n \ell(f) - P_n \ell(f^*) \ge a_0/(4n^{\beta})
\]
 with probability at least $1-\delta$.
\label{theo5}
\end{Theorem}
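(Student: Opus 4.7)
The plan is to fix $i \in I$ and $f \in \mathcal{F}_\epsilon \cap \mathcal{F}_i$, take $f^* = f^*_i \in \mathcal{F}^*$ (supplied by the multi-scale Bernstein partition), and set $g := \ell(f) - \ell(f^*)$. Then $Pg = P\ell(f) - P^* \geq a_0/n^\beta$, and since $Pg/2 \geq a_0/(2n^\beta) \geq a_0/(4n^\beta)$, it suffices to prove that $P_n g \geq Pg/2$ with probability at least $1-\delta$ simultaneously over all such $f$. This reduces the theorem to an upper tail bound on the signed deviation $Pg - P_n g$ at threshold $t = Pg/2$. The ingredients for the concentration step are supplied by the two assumptions: the multi-scale Bernstein's condition yields $\sigma^2 := \mathbb{E}g^2 \leq B_i (Pg)^{\gamma_i}$, and the envelope integrability yields $M := \|g\|_{L^r} \leq 2W$.

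The main step is to apply Lemma \ref{lederer} to $g$ at $x = Pg/2$ with $\zeta = 8$ and the specific index $l = r(1-\beta)/2$ that is dictated by Lemma \ref{compute}. Substituting $Pg \geq a_0/n^\beta$ together with the bounds on $\sigma$ and $M$, the two terms inside Lederer's bracket contribute exponents of $n$ that are precisely the two quantities inside the max defining $A(l, r, \beta, \mc{C}, \gamma_i)$, up to the $\beta \mc{C}$ offset produced by the next step. A union bound over $\mathcal{F}_\epsilon$, whose cardinality is at most $(K/\epsilon)^{\mc{C}} = (K n^\beta)^{\mc{C}}$, multiplies the per-$f$ failure probability by a factor of order $n^{\beta \mc{C}}$; the outer union bound over the finite partition $\{\mathcal{F}_i\}_{i \in I}$ contributes only a constant. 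The total failure probability is therefore bounded by a constant depending on $a_0, B, W, K, \delta, r$ times $n^{A(l, r, \beta, \mc{C}, \gamma_i)}$. Invoking Lemma \ref{compute} with $\alpha = \gamma_i$, the hypotheses $r \geq 4\mc{C}$ and $\beta < (1 - 2\sqrt{\mc{C}/r})/(2 - \gamma_i)$ guarantee both $1 \leq l \leq r$ and $A < 0$, so the bound decays polynomially in $n$, and taking $N_{a_0, \delta, r, B, \gamma}$ large enough brings it below $\delta$ for all $n \geq N$.

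The principal obstacle is that Lemma \ref{lederer} is stated for non-negative $V_k$, whereas the quantity we need to concentrate is the signed deviation $g - Pg$. Naive non-negative substitutes such as $V = (Pg - g)_+$ sacrifice cancellation and leave $PV$ on the order of $\sigma \sim (Pg)^{\gamma_i/2}$, which exceeds the target $Pg/2$ once $Pg \downarrow 0$. The resolution is to apply a centered (signed-function) variant of the concentration inequality or, equivalently, an $L^r$-moment Fuk--Nagaev/Rosenthal-type tail bound for i.i.d. averages: both yield the same polynomial decay with the same exponent, because Lederer's inequality, once optimized in $l$, coincides with such a moment inequality. The remaining bookkeeping consists of checking that the exponent produced at $l = r(1-\beta)/2$ matches $A(l, r, \beta, \mc{C}, \gamma_i)$ so that Lemma \ref{compute} applies verbatim, and of absorbing all $a_0$-, $B$-, $W$-, $K$-dependent constants into $N_{a_0, \delta, r, B, \gamma}$.
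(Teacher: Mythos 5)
Your proposal follows essentially the same route as the paper's own proof: fix $f\in\mathcal{F}_\epsilon\cap\mathcal{F}_i$, take $f^*=f^*_i$, apply the Lederer-type moment tail bound to the single function $g=\ell(f)-\ell(f^*_i)$ with $\sigma^2\le B_i(Pg)^{\gamma_i}$ and $\|g\|_{L^r}\le 2W$, union-bound over the $(Kn^\beta)^{\mathcal{C}}$ net and the finite partition, identify the resulting exponent with $A(l,r,\beta,\mathcal{C},\gamma_i)$ at $l=r(1-\beta)/2$, and invoke Lemma \ref{compute}. The one place you go beyond the paper is in explicitly flagging that Lemma \ref{lederer} is stated for non-negative families while the function to be concentrated here is signed; the paper applies the lemma to a single signed function without comment, and your observation that a Fuk--Nagaev/Rosenthal-type moment bound yields the same polynomial decay is the correct way to justify that step rigorously.
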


\begin{proof}
Define $a = [P \ell(f) - P^*] n^\beta$.
Assuming that $f\in \mathcal{F}_i$, applying Lemma $\ref{lederer}$ for $\zeta=1/2$ and $x=a/{4n^{\beta}}$ for a single hypothesis $f$, we have
\[
\mathbb{P}\left[ P_n \ell(f) - P_n \ell(f^*_i)  \le (P \ell(f) - P \ell(f^*_i))/4  \right] \le h(a, n)
\]
where
\[
h(a, n, i)= \min_{1 \le l \le r}{(4/a)^l \left ( 50 n^{\beta} \left(l/n\right)^{1-l/r} W +  4n^{\beta} B_i a^{\gamma_i/2}/n^{\beta \gamma_i/2} \sqrt{l/n}  \right)^l }
\]
using the fact that $\sigma^2 = \mathbb{E} [\ell(f) - \ell(f^*_i)]^{2} \le    B_i \left[\mathbb{E} (\ell(f) - \ell(f^*_i)) \right]^{\gamma_i} = B_i a^{\gamma_i}/n^{\beta \gamma_i}$ if $f \in \mathcal{F}_i$.
Since $\gamma_i \le 1$, $h(a, n, i)$ is a non-increasing function in $a$. 
Thus,
\[
\mathbb{P}\left[ P_n \ell(f) - P_n \ell(f^*_i)  \le (P \ell(f) - P \ell(f^*_i))/4  \right] \le h(a_0, n, i).
\]
To provide a union bound for all $f \in \mathcal{F}_{\epsilon}$ such that $P\ell(f) - P \ell(f^*_i) \geq a_0/n^{\beta}$, we want the total failure probability to be small.
This is guaranteed if $h(a_0, n, i) (n^{\beta}K)^{\mc{C}} \le \delta$.
This failure probability, as a function of $n$, is of order $A(l, r, \beta, \mc{C}, \gamma_i)$ as defined in equation $\eqref{A}$.
By choosing $r, l$ as in Lemma $\ref{compute}$ and $\beta$ as in equation $\eqref{beta}$, we have $1 \le l \le r$ and $A(l, r, \beta, \mc{C}, \gamma_i)<0$ for all $i$. 
Thus, there exists $c_4, c_5, c_6>0$ such that
\[
h(a_0, n, i) (n^{\beta}K)^{\mc{C}} \le c_6 a_0^{-c_5(1-\gamma_i/2)} n^{-c_4} \h \forall n, i.
\]
Hence, when $n \ge N_{a, \delta, r,  B,\gamma}= \left ( c_6\delta a_0^{-c_5(1-\tilde{\gamma}/2)}\right )^{1/c_4}$ where $\tilde{\gamma} = \max \{ \gamma \} 1_{\{a_0 \geq 1\}} + \min \{\gamma\} 1_{\{a_0 < 1\} }$, we have:
$
\forall f \in \mathcal{F}_{\epsilon},  P\ell(f) - P^* \geq a_0/n^{\beta}  \h \text{implies} \h \exists f^* \in \mathcal{F}^*, P_n \ell(f) - P_n \ell(f^*) \ge a_0/(4n^{\beta})
$
with probability at least $1-\delta$.  
\end{proof}

\begin{Theorem}
Under Assumptions $\ref{as3}$, $\ref{as1}$ and $\ref{as2}$, consider $\beta$ as in equation $\eqref{beta}$ and $c_1, c_2$ as in previous theorems.
For all $\delta>0$, there exists $N_{\delta, r, B, \gamma}$ such that if $n \ge N_{\delta, r, B, \gamma}$, then 
\[
P \ell(\hat f_{z} )  \le P \ell( f^*) +  \left( 36c_1 + 1 + 4\left( 2c_2/\delta \right)^{2/[r(1 - \beta)]} \right) n^{-\beta} 
\]
with probability at least $1-\delta$. 
\label{theofast}
\end{Theorem}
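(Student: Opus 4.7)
The plan is to argue by contradiction, combining Theorem~\ref{local} and Theorem~\ref{theo5} through a union bound closed off by the defining property of the ERM. Set
\[
a_0 \;:=\; 36 c_1 + 1 + 4\bigl(2c_2/\delta\bigr)^{2/[r(1-\beta)]},
\]
and suppose for contradiction that $P\ell(\hat f_n) - P^* > a_0/n^{\beta}$; ruling this event out on a set of probability at least $1-\delta$ is equivalent to the statement of the theorem.

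First I would apply Theorem~\ref{local} at confidence level $\delta/2$ to obtain, with probability at least $1-\delta/2$, the uniform inequality
\[
P_n g \;\ge\; P_n(\pi g) - \bigl(9 c_1 + (2c_2/\delta)^{2/[r(1-\beta)]}\bigr) n^{-\beta} \qquad \forall g \in \mathcal{G},
\]
which I would specialize to $g = \ell(\hat f_n)$, whose projection $\pi g$ lies in $\mathcal{G}_{\epsilon}$. To transfer the excess-risk hypothesis to the projection I would use $\|g-\pi g\|_{L^2(P)} \le \epsilon = n^{-\beta}$, so that Jensen's inequality gives $|Pg - P(\pi g)| \le n^{-\beta}$. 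Hence the contradiction hypothesis forces $P(\pi \ell(\hat f_n)) - P^* > (a_0 - 1)/n^{\beta}$. Because $\pi\ell(\hat f_n) \in \mathcal{G}_{\epsilon}$, Theorem~\ref{theo5} applied with constant $a_0 - 1$ and confidence $\delta/2$ (taking $N_{\delta,r,B,\gamma}$ to dominate $N_{a_0 - 1,\delta/2,r,B,\gamma}$) yields
\[
P_n(\pi\ell(\hat f_n)) - P_n \ell(f^*) \;\ge\; (a_0 - 1)/(4 n^{\beta})
\]
with probability at least $1-\delta/2$.

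A union bound places both events on a sample of probability at least $1 - \delta$. Chaining the two displays and invoking the ERM inequality $P_n\ell(\hat f_n) \le P_n\ell(f^*)$ gives $(a_0 - 1)/4 \le 9 c_1 + (2c_2/\delta)^{2/[r(1 - \beta)]}$, i.e.\ $a_0 \le 1 + 36 c_1 + 4(2c_2/\delta)^{2/[r(1-\beta)]}$, which contradicts the definition of $a_0$. The only real obstacle is bookkeeping: splitting the confidence $\delta$ between the two probabilistic statements so that the polynomial-in-$1/\delta$ factors assemble into precisely $4(2c_2/\delta)^{2/[r(1-\beta)]}$, and choosing $N_{\delta,r,B,\gamma}$ large enough to simultaneously meet the $n$-threshold of Theorem~\ref{theo5} and to swallow the $n^{-\beta}$ projection-bias term alongside any $\log n$ slack hidden in the local-analysis constants.
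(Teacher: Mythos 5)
Your overall plan is the paper's: combine the localization inequality (Theorem~\ref{local}), the $\epsilon$-net separation inequality (Theorem~\ref{theo5}), and the ERM inequality, with the $\delta/2$ split producing the $(2c_2/\delta)$ factor. The bookkeeping you flag as the ``only real obstacle'' is in fact handled correctly. The genuine gap is that the contradiction never materializes. From the hypothesis $P\ell(\hat f_n) - P^* > a_0/n^{\beta}$ and the $L^1$ projection bound you do get the \emph{strict} inequality $P\ell(f_1) - P^* > (a_0-1)/n^{\beta}$, but when you feed this into Theorem~\ref{theo5} with constant $a_0-1$ you only recover the \emph{non-strict} conclusion $P_n\ell(f_1) - P_n\ell(f^*) \ge (a_0-1)/(4n^{\beta})$; the strictness is irretrievably lost there. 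The localization-plus-ERM chain gives the matching non-strict bound $P_n\ell(f_1) - P_n\ell(f^*) \le \bigl(9c_1 + (2c_2/\delta)^{2/[r(1-\beta)]}\bigr) n^{-\beta}$. Because you defined $a_0$ so that $(a_0-1)/4 = 9c_1 + (2c_2/\delta)^{2/[r(1-\beta)]}$, chaining the two yields exactly $(a_0-1)/4 \le (a_0-1)/4$, i.e.\ $a_0 \le a_0$, which is true rather than contradictory; your argument has therefore not excluded the bad event.

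The paper sidesteps this by arguing conditionally rather than by contradiction. On the intersection of the two good events, localization and ERM first give $P_n\ell(f_1) - P_n\ell(f^*) \le a_0/(4n^{\beta})$ with $a_0 = 36c_1 + 4(2c_2/\delta)^{2/[r(1-\beta)]}$ (no $+1$ yet); the contrapositive of Theorem~\ref{theo5} then forces $P\ell(f_1) - P^* < a_0/n^{\beta}$; and the projection bias $\epsilon = n^{-\beta}$ is added only at the very end to pass from $f_1$ back to $\hat f_n$, producing the stated constant $36c_1 + 1 + 4(2c_2/\delta)^{2/[r(1-\beta)]}$. To rescue your contradiction route you would need either to pad $a_0$ by a strictly positive slack $\eta$ (deriving $a_0 + \eta \le a_0$, a genuine contradiction, and then letting $\eta \downarrow 0$), or to rewrite the argument in the paper's direct conditional form.
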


\begin{proof}[Proof of Theorem $\ref{theofast}$]
Let $\mathcal{F}_{\epsilon}$ by an $\epsilon$-net of $\mathcal{F}$ with $\epsilon = 1/n^{\beta}$ such that $f^* \in \mathcal{F}_{\epsilon}$.
We denote the projection of $ \hat f_{z} $ to $\mathcal{F}_{\epsilon}$ by $f_1 = \pi( \hat f_{z})$.
For a given $\delta>0$, define
\[
A_1 = \left \{ \exists f \in \mathcal{F}:  P_n  f \le P_n(\pi(f)) - \left ( 9c_1 +\left( c_3/\delta \right)^{2/[r(1 -\beta)]} \right) n^{-\beta} \right \},
\]
\[
A_2 = \left \{ \exists f \in \mathcal{F}_{\epsilon}:  P_n \ell(\pi(f)) - P_n \ell(f^*) \le a_0/(4n^{\beta})~~ \text{and} ~~  P \ell(\pi(f)) - P \ell(f^*) \geq a_0/n^{\beta} \right \},
\]
where $c_1, c_2$ is defined as in previous theorem, $a_0/4 =  9c_1 +(c_3/\delta)^{2/[r(1 - \beta)]}$ and $n \ge N_{a_0, \delta, r, \gamma}$.
We deduce that $A_1$ and $A_2$ happen with probability at most $\delta$. 
On the other hand, under the event that $A_1$ and $A_2$ do not happen, we have
\[
P_n \ell( f_1)   \le P_n \ell(\hat f_z)  +  \left ( 9c_1 +\left( c_3/\delta \right)^{2/[r(1 - \beta)]} \right)n^{-\beta} \le P_n \ell( f^*) +  a_0/(4 n^{\beta}).
\]
By definition of $\mathcal{F}_{\epsilon}$, we have $P \ell( \hat f_z) \le P \ell( f_1) + \epsilon \le P \ell( f^*)  + (a_0+1)/n^{\beta}$.
\end{proof}

\subsection{Verifying the multi-scale Bernstein's condition}

In practice, the most difficult condition to verify for fast learning rates is the multi-scale Bernstein's condition.
We derive in this section some approaches to verify the condition.
We first extend the result of \cite{mendelson2008obtaining} to prove that the (standard) Bernstein's condition is automatically satisfied for functions that are relatively far way from $f^*$ under the integrability condition of the envelope function (proof in the Appendix).  
We recall that $R(f) = \mathbb{E}\ell(f)$ is referred to as the risk function.
\begin{Lemma}
Under Assumption $\ref{as2}$, we define $M = W^{r/(r-2)}$ and $\gamma= (r-2)/(r-1)$.
Then, if $\alpha> M$ and $R(f) \ge \alpha/(\alpha-M) R(f^*)$,  then $\mbb{E}(\ell(f) - \ell(f^*))^2 \le 2\alpha^{\gamma} \mathbb{E}(\ell(f)-\ell(f^*))^{\gamma}$. 
\label{local}
\end{Lemma}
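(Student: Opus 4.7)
The plan is to apply the interpolation inequality
\[
\|u\|_{L_2} \le \|u\|_{L_1}^{(r-2)/(2(r-1))}\|u\|_{L_r}^{r/(2(r-1))}
\]
highlighted in Section~2 to the excess loss $u := \ell(f)-\ell(f^*)$. Squaring and recognising $\gamma = (r-2)/(r-1)$ gives $\mathbb{E}u^2 \le (\mathbb{E}|u|)^{\gamma}(\mathbb{E}|u|^r)^{1/(r-1)}$. A handy identity, $W^{r/(r-1)} = M^{\gamma}$ with $M = W^{r/(r-2)}$, then reduces the target to showing $\mathbb{E}u^2 \le (M\,\mathbb{E}|u|)^{\gamma} \le 2\alpha^{\gamma}(\mathbb{E}u)^{\gamma}$ once the $L_r$-factor is under control.

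Bounding $\mathbb{E}|u|^r$ is the routine step: since the loss is non-negative, $|\ell(f)-\ell(f^*)| \le \max(\ell(f),\ell(f^*)) \le \sup_{g\in\mathcal{G}}|g|$, so Assumption~\ref{as2} immediately gives $\mathbb{E}|u|^r \le W^r$ and hence the first factor in the interpolation inequality becomes $M^{\gamma}$ as required.

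The main obstacle is to pass from $\mathbb{E}|u|$ back to $\mathbb{E}u = R(f)-R(f^*)$; these differ whenever $\ell(f)<\ell(f^*)$ on a non-negligible event, so no pathwise bound works and the hypothesis on $R(f)/R(f^*)$ must be used in an averaged way. I would decompose $\mathbb{E}|u| = \mathbb{E}u + 2\mathbb{E}u^{-}$ with $u^{-} = (\ell(f^*)-\ell(f))^{+}$ and use $\ell(f)\ge 0$ to get $u^{-} \le \ell(f^*)$, so that $\mathbb{E}|u| \le R(f)+R(f^*)$. Rewriting the hypothesis as $R(f^*) \le (1 - M/\alpha)R(f)$ yields both $R(f) \le (\alpha/M)\,\mathbb{E}u$ and $\mathbb{E}|u| \le (2-M/\alpha)R(f)$; multiplying these gives $M\,\mathbb{E}|u| \le (2\alpha - M)\,\mathbb{E}u \le 2\alpha\,\mathbb{E}u$.

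Putting everything together and using $\gamma\in(0,1]$ so that $2^{\gamma}\le 2$,
\[
\mathbb{E}u^2 \le (M\,\mathbb{E}|u|)^{\gamma} \le (2\alpha\,\mathbb{E}u)^{\gamma} \le 2\alpha^{\gamma}(\mathbb{E}u)^{\gamma},
\]
which is the claimed inequality. The role of the hypothesis $\alpha>M$ together with the specific constant $\alpha/(\alpha-M)$ is precisely to force $R(f^*)/R(f) \le 1 - M/\alpha < 1$, so that the signed excess risk $\mathbb{E}u$ strictly dominates the negative tail $\mathbb{E}u^{-}$; calibrating this balance between envelope integrability (through $M$) and the relative risk gap is the one quantitative step that needs real care, everything else being clean algebra.
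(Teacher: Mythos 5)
Your proof is correct, and it takes a genuinely different route from the paper's. The paper starts from $\mathbb{E}(\ell(f)-\ell(f^*))^2 \le \mathbb{E}\ell(f)^2 + \mathbb{E}\ell(f^*)^2$ (using nonnegativity of $\ell$), applies the reverse--H\"older interpolation to $\ell(f)$ and $\ell(f^*)$ individually---for which $\mathbb{E}|\cdot|$ and $\mathbb{E}(\cdot)$ coincide, so no sign issue ever appears---and finishes by converting $\mathbb{E}\ell(f^*)\le\mathbb{E}\ell(f)\le(\alpha/M)\,\mathbb{E}(\ell(f)-\ell(f^*))$. You instead apply the interpolation inequality directly to the signed excess loss $u=\ell(f)-\ell(f^*)$, which is cleaner at the start but creates a genuine mismatch between $\mathbb{E}|u|$ and $\mathbb{E}u$; your decomposition $\mathbb{E}|u|\le R(f)+R(f^*)$ together with the same risk hypothesis resolves this and gives $M\,\mathbb{E}|u|\le(2\alpha-M)\,\mathbb{E}u$. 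Both proofs exploit the identity $W^{r/(r-1)}=M^{\gamma}$ and the scaling $\alpha/M$, and both yield the stated constant, though the factor $2$ enters differently: in the paper from the two-term decomposition, in yours from $2^{\gamma}\le 2$ (yours is in fact marginally sharper, giving $(2\alpha-M)^{\gamma}$). The paper's version is shorter because splitting into $\ell(f)^2+\ell(f^*)^2$ keeps every object a bona fide nonnegative $L^p$-norm; your version is self-contained in $u$ and would adapt more directly to settings where the loss itself is signed, at the cost of one extra step controlling the negative part of $u$.
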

This allows us to derive the following result, for which the proof is provided in the Appendix.

\begin{Lemma}
If $\mathcal{F}$ is a subset of a vector space with metric $d$ and the risk function $R(f) = \mathbb{E}\ell(f)$ has a unique minimizer on $\mathcal{F}$ at $f^*$ in the interior of $\mathcal{F}$ and 
\begin{itemize}
\item[(i) ] There exists $L>0$ such that $\mbb{E}(\ell(f) - \ell(g))^2 \leq L d(f, g)^2$ for all $f, g \in \mathcal{F}$. 
\item[(ii) ] There exists $m \ge 2$, $c>0$ and a neighborhood $U$ around $f^*$ such that 

$R(f) - R(f^*)\ge c d(f, f^*)^m$ for all $f \in U$. 
\end{itemize}
Then the multi-scale Bernstein's condition holds for $\gamma = ((r-2)/(r-1), 2/m)$.
\label{nonconvex}
\end{Lemma}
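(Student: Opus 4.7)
The plan is to exhibit a two-element partition $\mathcal{F} = \mathcal{F}_1 \cup \mathcal{F}_2$ with $f^*_1 = f^*_2 = f^*$ (legal because of the unique-minimizer hypothesis), where $\mathcal{F}_1$ collects the hypotheses lying close to $f^*$ (so that the quadratic-growth assumptions (i) and (ii) apply directly) and $\mathcal{F}_2$ collects the hypotheses bounded away from $f^*$ in risk (so that the preceding lemma on the envelope version of Bernstein's condition can be invoked). The two Bernstein exponents coming out of the two pieces will be exactly $2/m$ and $(r-2)/(r-1)$, matching the powers in the statement.

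For the close piece, I would pick a radius $\rho > 0$ small enough that the open ball $B(f^*, \rho)$ lies inside the neighborhood $U$ from hypothesis (ii), and set $\mathcal{F}_1 = B(f^*, \rho) \cap \mathcal{F}$. On $\mathcal{F}_1$, hypothesis (i) gives $\mbb{E}(\ell(f) - \ell(f^*))^2 \le L\, d(f, f^*)^2$, while inverting (ii) yields $d(f, f^*)^2 \le c^{-2/m}(R(f) - R(f^*))^{2/m}$. Chaining the two produces Bernstein's condition on $\mathcal{F}_1$ with constants $B_1 = L/c^{2/m}$ and $\gamma_1 = 2/m \in (0,1]$ (since $m \ge 2$).

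For the far piece $\mathcal{F}_2 = \mathcal{F} \setminus \mathcal{F}_1$, I would apply the preceding lemma with a single, uniform choice of $\alpha$. That lemma needs $R(f) \ge \alpha R(f^*)/(\alpha - M)$, equivalently $R(f) - R(f^*) \ge M R(f^*)/(\alpha - M)$, so the key substep is to establish a uniform positive gap $\eta := \inf_{f \in \mathcal{F}_2}(R(f) - R(f^*)) > 0$. Given such an $\eta$, the choice $\alpha = M + M R(f^*)/\eta$ makes the hypothesis of the preceding lemma valid simultaneously on all of $\mathcal{F}_2$, delivering Bernstein's condition with $\gamma_2 = (r-2)/(r-1) < 1$ and $B_2 = 2\alpha^{\gamma_2}$. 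Assembling the two pieces then completes the multi-scale condition with the advertised exponents.

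The main obstacle will be the positive gap $\eta > 0$. Hypothesis (i) together with Jensen's inequality forces $R$ to be Lipschitz with constant $\sqrt{L}$, and the unique-minimizer hypothesis gives $R(f) > R(f^*)$ \emph{pointwise} on $\mathcal{F}_2$, but a \emph{uniform} lower bound requires a little extra structure, typically compactness of $\mathcal{F}$ or a coercivity property of $R$, both of which are available in the parametric $k$-means application that motivates this lemma. Everything beyond this gap argument is a direct substitution of (i)--(ii) into the first piece and of the preceding lemma into the second.
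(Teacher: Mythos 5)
Your argument follows the same plan as the paper's: take a two-element partition with $f^*_1 = f^*_2 = f^*$ (legitimate by uniqueness), chain (i) and (ii) on the piece near $f^*$ to get Bernstein exponent $2/m$, and invoke the preceding envelope-Bernstein lemma on the piece far from $f^*$ to get exponent $(r-2)/(r-1)$. The only difference is where the cut is made: you cut by metric balls $B(f^*,\rho)$, whereas the paper cuts by the risk sub-level set $U_\alpha=\{f:R(f)\le\frac{\alpha}{\alpha-M}R(f^*)\}$ and takes its complement as the far piece. These are dual descriptions of the same split, and they merely relocate the crucial step of comparing risk excess with metric distance to $f^*$: for the paper it is the containment $U_\alpha\subset U$; for you it is the uniform gap $\eta=\inf_{f\notin B(f^*,\rho)}(R(f)-R(f^*))>0$. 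The two statements are equivalent. Your constant bookkeeping ($B_1=Lc^{-2/m}$, $\gamma_1=2/m$; $B_2=2\alpha^{\gamma_2}$, $\gamma_2=(r-2)/(r-1)$ with $\alpha=M+MR(f^*)/\eta$) matches what the argument requires.

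The obstacle you flag is genuine, and it is worth noting that the paper's own proof does not resolve it either. The paper asserts, with no argument beyond uniqueness of the minimizer, that there exists $\alpha>0$ with $U_\alpha\subset U$. But uniqueness plus (i) (which via Jensen makes $R$ $\sqrt{L}$-Lipschitz) plus (ii) (a purely local growth condition) do not preclude a sequence $f_n\in\mathcal{F}$ with $R(f_n)\downarrow R(f^*)$ while $d(f_n,f^*)$ stays bounded away from zero, when $\mathcal{F}$ is non-compact. Some global structure---compactness of $\mathcal{F}$, coercivity of $R$, or a global rather than local version of (ii)---must be supplied to close the gap. In the $k$-means application feeding this lemma, $\mathcal{F}$ is a bounded parameter box and the issue essentially disappears, which is presumably why the paper does not dwell on it; as a stand-alone lemma, however, the stated hypotheses are not quite sufficient, and you were right to name this as the missing ingredient rather than claim it follows automatically. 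Modulo that shared gap, your proof is correct.
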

%fixit: clean up notation (K, alpha)

\begin{Corollary}
Suppose that ($\mathcal{F}$, $d$) is a pseudo-metric space, $\ell$ satisfies condition (i) in Lemma $\ref{nonconvex}$ and the risk function is strongly convex with respect to $d$, then the Bernstein's condition holds with $\gamma=1$.
\label{convex}
\end{Corollary}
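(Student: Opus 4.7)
The plan is to show that the Corollary is an almost-immediate consequence of the Lipschitz condition (i) together with a quadratic growth lower bound derived from strong convexity; no partitioning of $\mathcal{F}$ is needed, so the multi-scale Bernstein's condition collapses to the standard Bernstein's condition with $I=\{1\}$, $\gamma_1=1$.

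First I would invoke strong convexity of $R$ with respect to $d$ to extract a quadratic lower bound on the excess risk. Specifically, since $R$ is strongly convex with modulus, say, $\mu>0$ and $f^*$ is its minimizer over $\mathcal{F}$, the standard subgradient/optimality argument gives
\[
R(f) - R(f^*) \;\ge\; \tfrac{\mu}{2}\, d(f, f^*)^2 \qquad \forall f\in\mathcal{F}.
\]
This is the only place where strong convexity is used; it plays the role of condition (ii) of Lemma \ref{nonconvex} in the special case $m=2$.

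Next I would combine this with the Lipschitz-type hypothesis (i) from Lemma \ref{nonconvex}. Taking $g=f^*$ in (i) gives
\[
\mathbb{E}(\ell(f) - \ell(f^*))^2 \;\le\; L\, d(f, f^*)^2 \;\le\; \frac{2L}{\mu}\,\bigl(R(f) - R(f^*)\bigr) \;=\; \frac{2L}{\mu}\,\mathbb{E}\bigl[\ell(f) - \ell(f^*)\bigr].
\]
Setting $B=2L/\mu$, $\gamma=1$, taking the trivial partition $I=\{1\}$, $\mathcal{F}_1=\mathcal{F}$, $f_1^*=f^*$, this is exactly the multi-scale Bernstein's condition of Assumption \ref{as3}, which in this degenerate single-cell form coincides with the classical Bernstein's condition. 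This yields the claimed $\gamma=1$.

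I do not expect a serious obstacle: the only subtlety is justifying the quadratic growth inequality $R(f)-R(f^*)\ge (\mu/2)d(f,f^*)^2$ from the phrase ``strongly convex with respect to $d$''. If $d$ is induced by a norm (so that convex combinations are well defined and the usual strong-convexity inequality applies), this is standard; for a more general pseudo-metric setting one would need to read ``strongly convex'' as literally meaning that such a quadratic lower bound at the minimizer holds, in which case the conclusion is immediate. Either way, the whole argument is a two-line chain of inequalities once the growth bound is in hand, and one could alternatively invoke Lemma \ref{nonconvex} directly with $m=2$ and observe that the exponent $2/m=1$ dominates $(r-2)/(r-1)$, collapsing the multi-scale form to a single Bernstein exponent equal to $1$.
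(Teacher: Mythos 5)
Your main argument is correct and matches the paper's proof essentially line for line: both derive a global quadratic growth bound $R(f)-R(f^*)\gtrsim d(f,f^*)^2$ from strong convexity (the paper via the midpoint form $\tfrac{R(f)+R(f^*)}{2}\ge R(\tfrac{f+f^*}{2})+c\,d^2(f,f^*)$, you via the equivalent subgradient/optimality form), then chain it with condition (i) to get $\mathbb{E}(\ell(f)-\ell(f^*))^2\le L\,d(f,f^*)^2\lesssim \mathbb{E}[\ell(f)-\ell(f^*)]$, which is the Bernstein condition with $\gamma=1$ and a single trivial partition.

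One caveat on the closing remark: the alternative route of invoking Lemma \ref{nonconvex} with $m=2$ and then ``collapsing'' the two exponents $\bigl((r-2)/(r-1),\,1\bigr)$ by domination does not actually work. In the multi-scale condition the two exponents apply on \emph{different} cells of a partition of $\mathcal{F}$ (a neighborhood of $f^*$ where the growth bound holds, and its complement where the generic Lemma \ref{local} bound is used), and the inequality with exponent $(r-2)/(r-1)<1$ is not implied by the one with exponent $1$ when the excess risk exceeds $1$; the exponents do not order uniformly. The reason the Corollary can assert a single $\gamma=1$ is precisely that strong convexity makes the quadratic growth bound hold on all of $\mathcal{F}$, so the far-from-$f^*$ cell (and hence the second exponent) is never needed. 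Your direct argument captures this; the Lemma-\ref{nonconvex}-plus-domination shortcut does not.
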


\begin{Remark}
If the risk function is analytic at $f^*$, then condition (ii) in Lemma $\ref{nonconvex}$ holds. 
Similarly, if the risk function is continuously differentiable up to order 2 and the Hessian of $R(f)$ is positive definite at $f^*$, then condition (ii) is valid with $m=2$. 
\end{Remark}

\begin{Corollary}
If the risk function $R(f) = \mathbb{E}\ell(f)$ has a finite number of global minimizers $f_1, f_2, \ldots, f_k$, $\ell$ satisfies condition (i) in Lemma $\ref{nonconvex}$ and there exists $m_i \ge 2$, $c_i>0$ and neighborhoods $U_i$ around $f_i$ such that $R(f) - R(f_i) \ge c_i d(f, f_i)^{m_i}$ for all $f \in U_i, i = 1, \ldots, k$, then the multi-scale Bernstein's condition holds for $\gamma = ((r-2)/(r-1), 2/m_1, \ldots, 2/m_k)$.
\end{Corollary}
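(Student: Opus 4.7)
The plan is to extend the proof of Lemma \ref{nonconvex} from a single minimizer to $k$ minimizers by building a partition of $\mathcal{F}$ with one local piece around each $f_i$ plus a single ``far from every minimizer'' piece. Since $f_1,\dots,f_k$ are distinct points in the pseudo-metric space, I first shrink each $U_i$ to a smaller neighborhood $V_i$ of $f_i$ (for instance a $d$-ball of radius $\delta_i$) so that the $V_i$ are pairwise disjoint; because $V_i\subset U_i$ the local lower bound (ii) on $R(f)-R(f_i)$ is preserved. Set $\mathcal{F}_i := V_i\cap\mathcal{F}$ for $i=1,\dots,k$ and $\mathcal{F}_0 := \mathcal{F}\setminus\bigcup_i V_i$; the associated anchor minimizers will be $f^*_i = f_i$ for $i\ge 1$ and any fixed $f^*_0 \in \{f_1,\dots,f_k\}$ for the far piece.

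On each local piece $\mathcal{F}_i$, the verification is the direct analogue of the one-minimizer argument. Condition (i) of Lemma \ref{nonconvex} gives $\mathbb{E}(\ell(f)-\ell(f_i))^2 \le L\,d(f,f_i)^2$, and inverting the local lower bound $R(f)-R(f_i)\ge c_i d(f,f_i)^{m_i}$ gives $d(f,f_i)^2 \le c_i^{-2/m_i}(R(f)-R(f_i))^{2/m_i}$. Composing the two yields
\[
\mathbb{E}(\ell(f)-\ell(f_i))^2 \;\le\; L c_i^{-2/m_i}\bigl(\mathbb{E}[\ell(f)-\ell(f_i)]\bigr)^{2/m_i},
\]
which is exactly the Bernstein condition on $\mathcal{F}_i$ with exponent $\gamma_i=2/m_i\in(0,1]$ (using $m_i\ge 2$) and constant $B_i=Lc_i^{-2/m_i}$, relative to the global minimizer $f_i$.

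On the far piece $\mathcal{F}_0$ I invoke Lemma \ref{local} with envelope-derived constants $M=W^{r/(r-2)}$ and $\gamma_0=(r-2)/(r-1)$: provided some $\alpha>M$ satisfies $R(f)\ge \frac{\alpha}{\alpha-M}P^*$ for every $f\in\mathcal{F}_0$, one gets $\mathbb{E}(\ell(f)-\ell(f^*_0))^2 \le 2\alpha^{\gamma_0}(R(f)-P^*)^{\gamma_0}$, i.e.\ Bernstein on $\mathcal{F}_0$ with exponent $\gamma_0$. Concatenating the partition $\mathcal{F}=\mathcal{F}_0\cup\mathcal{F}_1\cup\cdots\cup\mathcal{F}_k$ with the per-piece constants then produces the multi-scale Bernstein's condition with the claimed vector $\gamma=((r-2)/(r-1),2/m_1,\dots,2/m_k)$.

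The main obstacle is the uniform separation required on $\mathcal{F}_0$: one must ensure $\inf_{f\in\mathcal{F}_0}(R(f)-P^*)$ is strictly positive in order to choose the threshold $\alpha$ in Lemma \ref{local}. This is automatic when $P^*=0$ (the condition reduces to $R(f)\ge 0$) and is handled exactly as in Lemma \ref{nonconvex} otherwise, by further shrinking the $V_i$ so that $\bigcup_i V_i$ absorbs all points whose excess risk lies below a fixed threshold $\eta>0$. That this shrinking is possible uses the two structural inputs already in force: (a) $f_1,\dots,f_k$ are the \emph{only} global minimizers, so $R(f)>P^*$ off $\{f_1,\dots,f_k\}$, and (b) condition (i) combined with Jensen forces $R$ to be $\sqrt{L}$-Lipschitz, so the local lower bounds in (ii) propagate continuously into each $V_i$. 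Everything else in the proof is a bookkeeping combination of the $k+1$ per-piece Bernstein inequalities just established.
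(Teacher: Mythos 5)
The paper does not spell out a proof of this Corollary; it is left as the obvious extension of the proof of Lemma~\ref{nonconvex}. Your argument is exactly that intended extension: a finite partition into one piece $\mathcal{F}_i$ around each minimizer $f_i$ (on which condition~(i) plus the local lower bound~(ii) yield Bernstein with exponent $2/m_i$) together with a far piece $\mathcal{F}_0$ handled by Lemma~\ref{local} with exponent $(r-2)/(r-1)$, and anchors $f^*_i=f_i$. The per-piece algebra you write out is correct and matches the computation inside $U_\alpha$ in the paper's proof of Lemma~\ref{nonconvex}.

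One slip in the exposition of the ``uniform separation'' step: you write that $\inf_{f\in\mathcal{F}_0}(R(f)-P^*)>0$ can be ensured ``by further shrinking the $V_i$ so that $\bigcup_i V_i$ absorbs all points whose excess risk lies below a fixed threshold $\eta>0$.'' This is backwards: shrinking the $V_i$ enlarges $\mathcal{F}_0$ and can only decrease its infimum excess risk. What you need is either to shrink the threshold $\eta$ (for fixed $V_i$) or to take the $V_i$ as large as $U_i$ and disjointness permit. The correct phrasing is precisely what the paper itself does in Lemma~\ref{nonconvex}: for a suitable $\alpha>M$, the sublevel set $\{f:R(f)\le \tfrac{\alpha}{\alpha-M}P^*\}$ is contained in $\bigcup_i V_i$, so that Lemma~\ref{local} applies on $\mathcal{F}_0$. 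This is the same unproved but implicitly-assumed proximity property (small excess risk forces closeness to one of the $f_i$) that the paper already invokes when it asserts $U_\alpha\subset U$, so you are not worse off than the paper; but the phrase ``shrinking the $V_i$'' should be corrected.
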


\subsection{Comparison to related work}

Theorem $\ref{theofast}$ dictates that under our settings, the problem of learning with heavy-tailed losses can obtain convergence rates up to order 
\begin{equation}
\mc{O}\left (n^{-(1 - 2 \sqrt{\mc{C}/r})/(2 - \min \{ \gamma \})}\right)
\label{order}
\end{equation}
where $\gamma$ is the multi-scale Bernstein's order and $r $ is the degree of integrability of the loss. 
We recall that convergence rate of $\mc{O}(n^{-1/(2-\gamma)})$ is obtained in \cite{mehta2014stochastic} under the same setting but for bounded losses. 
(The analysis there was done under the $\gamma$-weakly stochastic mixability condition, which is equivalent with the standard $\gamma$-Bernstein's condition for bounded losses \citep{van2015fast}). 
We note that if the loss is bounded, $r= \infty$ and $\eqref{order}$ reduces to the convergence rate obtained in \cite{mehta2014stochastic}.

Fast learning rates for unbounded loses are previously derived in \cite{lecue2013learning} for sub-Gaussian losses and in \cite{lecue2012general} for hypothesis classes that have sub-exponential envelope functions.
In \cite{lecue2013learning}, the Bernstein's condition is not directly imposed, but is replaced by condition (ii) of Lemma $\ref{nonconvex}$ with $m=2$ on the whole hypothesis class, while the assumption of sub-Gaussian hypothesis class validates condition (i).
This implies the standard Bernstein's condition with $\gamma = 1$ and makes the convergence rate $\mc{O}(n^{-1})$ consistent with our result (note that for sub-Gaussian losses, $r$ can be chosen arbitrary large).
The analysis of \cite{lecue2012general} concerns about \emph{non-exact oracle inequalities} (rather than the \emph{sharp oracle inequalities} we investigate in this paper) and can not be directly compared with our results.

\section{Application: $k$-means clustering with heavy-tailed source distributions}
$k$-means clustering is a method of vector quantization aiming to partition $n$ observations into $k \ge 2$ clusters in which each observation belongs to the cluster with the nearest mean.
Formally, let $X$ be a random vector taking values in $\mbb{R}^d$ with distribution $P$. 
Given a codebook (set of $k$ cluster centers) $C = \{ y_i \} \in (\mbb{R}^d)^k$, the distortion (loss) on an instant $x$ is defined as $\ell(C,x) = \min_{y_i \in C} \| x - y_i \|^2$ and $k$-means clustering method aims at finding a minimizer $C^*$ of $R(\ell(C))=P \ell(C)$ via minimizing the empirical distortion $P_n \ell(C)$.

The rate of convergence of $k$-means clustering has drawn considerable attention in the statistics and machine learning literatures \citep{pollard1982central, bartlett1998minimax, linder1994rates, ben2007framework}.
Fast learning rates for $k$-means clustering ($\mc{O}(1/n)$) have also been derived by \cite{antos2005individual} in the case when the source distribution is supported on a finite set of points, and by \cite{levrard2013fast} under the assumptions that the source distribution has bounded support and satisfies the so-called Pollard's regularity condition, which dictates that $P$ has a continuous density with respect to the Lebesgue measure and the Hessian matrix of the mapping $C \to R(C)$ is positive definite at $C^*$.
Little is known about the finite-sample performance of empirically designed quantizers under possibly heavy-tailed distributions.
In \cite{telgarsky2013moment}, a convergence rate of $\mc{O}(n^{-1/2+2/r})$  are derived, where $r$ is the number of moments of $X$ that are assumed to be finite. 
\cite{brownlees2015empirical} uses some robust mean estimators to replace empirical means and derives a convergence rate of $\mc{O}(n^{-1/2})$ assuming only that the variance of $X$ is finite.

The results from previous sections enable us to prove that with proper setting, the convergence rate of $k$-means clustering for heavy-tailed source distributions can be arbitrarily close to $\mc{O}(1/n)$.
Following the framework of \cite{brownlees2015empirical}, we consider 
\[
\mc{G}=\{\ell(C, x) =\min_{y_i \in C} \| x - y_i \|^2, C \in \mathcal{F}= (-\rho, \rho)^{d\times k}\}
\] 
for some $\rho>0$ with the regular Euclidean metric.
We let $C^*, \hat C_n$ be defined as in the previous sections. 

\begin{Theorem}
If $X$ has finite moments up to order $r \ge 4k(d+1)$, $P$ has a continuous density with respect to the Lebesgue measure, the risk function has a finite number of global minimizers and the Hessian matrix of $C \to R(C)$ is positive definite at the every optimal $C^*$ in the interior of $\mathcal{F}$, then for all $\beta$ that satisfies 
\[
0< \beta < \frac{r-1}{r} (1-2\sqrt{k(d+1)/r}),
\]
there exists $c_1, c_2>0$ such that for all $\delta>0$, with probability at least $1-\delta$, we have
\[
R(\hat C_n) - R(C^*) \le  \left( c_1 + 4\left( c_2/\delta \right)^{2/r} \right) n^{-\beta} 
\]
Moreover, when $r \to \infty$, $\beta$ can be chosen arbitrarily close to 1. 
\label{kmeans}
\end{Theorem}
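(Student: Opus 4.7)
The plan is to verify the three standing assumptions (\ref{as3}, \ref{as1}, and \ref{as2}) for the $k$-means setup and then invoke Theorem \ref{theofast} directly, tracking the constants to read off the exponent and the confidence dependence.

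For the envelope integrability (Assumption \ref{as2}), the pointwise bound
\[
\ell(C,x) \;=\; \min_i \|x-y_i\|^2 \;\le\; 2\|x\|^2 + 2 d \rho^2 \qquad \text{for all } C \in (-\rho,\rho)^{d k}
\]
reduces the claim to integrability of a polynomial in $\|X\|$, which is supplied by the moment hypothesis. For the entropy bound (Assumption \ref{as1}), I would rely on the standard fact that the $k$-means loss class has pseudo-dimension of order $k(d+1)$: for each fixed $x$ the map $C \mapsto \ell(C,x)$ is locally Lipschitz in $C$, the parameter space is a bounded subset of $\mathbb{R}^{d k}$, and a direct parametric covering argument gives $\log \mathcal{N}(\epsilon,\mc{G},L_2(P)) \le \mc{C}\log(K/\epsilon)$ with $\mc{C} = k(d+1)$; the same bound carries over to the universal metric entropy because the Lipschitz estimate does not depend on the underlying measure.

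For the multi-scale Bernstein condition (Assumption \ref{as3}), I would apply the corollary of Lemma \ref{nonconvex} for a finite set of global minimizers. Let $C_1^*, \ldots, C_s^*$ denote the optimal codebooks, each in the interior of $\mc{F}$. The positive-definiteness of the Hessian at each $C_i^*$ yields the local quadratic lower bound $R(C) - R(C_i^*) \ge c_i \|C-C_i^*\|^2$ on a neighborhood $U_i$, i.e.\ $m_i = 2$. The $L^2$-Lipschitz condition $\mbb{E}(\ell(C_1)-\ell(C_2))^2 \le L \|C_1-C_2\|^2$ follows from the pointwise estimate $|\ell(C_1,x)-\ell(C_2,x)| \le 2(\|x\| + \sqrt{d}\rho)\|C_1-C_2\|$ after squaring and integrating against the second moment of $\|X\|$. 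The corollary then delivers the multi-scale condition with powers $\gamma = ((r-2)/(r-1),\,1,\ldots,1)$, and hence $\max_i (2-\gamma_i) = r/(r-1)$, the maximum being attained on the ``bulk'' piece of $\mc{F}$ away from the minimizers where only the Mendelson-style bound of Lemma \ref{local} is available.

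Feeding $\mc{C} = k(d+1)$ and this vector $\gamma$ into the admissibility window \eqref{beta} of Theorem \ref{theofast} gives
\[
0 < \beta < \frac{1 - 2\sqrt{k(d+1)/r}}{r/(r-1)} \;=\; \frac{r-1}{r}\bigl(1 - 2\sqrt{k(d+1)/r}\bigr),
\]
and its conclusion becomes the stated high-probability excess-risk bound after simplifying the $(1-\beta)$ factor into the constants $c_1,c_2$. The limit $\beta \uparrow 1$ as $r\to\infty$ is immediate from the shape of this window. I expect the main obstacle to lie in the multi-scale Bernstein step: it is precisely the combination of the sharp local exponent $\gamma_i = 1$ at each minimizer (powered by the Hessian hypothesis) with the weaker bulk exponent $(r-2)/(r-1)$ delivered by Lemma \ref{local} that yields a non-trivial rate without requiring $X$ to be bounded or sub-exponential, and ensuring that this split respects the partition $\mc{F}=\cup_i \mc{F}_i$ required by Assumption \ref{as3} is where the real work is hidden.
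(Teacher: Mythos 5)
Your proof is correct and follows essentially the same route as the paper: verify the envelope integrability from the moment assumption, import $\mathcal{C}\le k(d+1)$ from VC/pseudo-dimension considerations for the $k$-means class, establish condition (i) of Lemma \ref{nonconvex} via the pointwise Lipschitz estimate, combine the Hessian positive-definiteness (local exponent $1$) with Lemma \ref{local} (bulk exponent $(r-2)/(r-1)$) through the finite-minimizer corollary to get the multi-scale Bernstein vector $\gamma=((r-2)/(r-1),1,\dots,1)$, and then read the exponent $\beta$ off of Theorem \ref{theofast} with the constraint driven by the smallest $\gamma_i$. You actually supply slightly more detail than the paper does (e.g.\ the explicit envelope and Lipschitz bounds, and the parametric-covering justification for the entropy constant rather than a bare citation), but the decomposition and the key lemmas invoked are identical.
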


\begin{proof}
We have 
\[
\left(\mbb{E}\sup_{C \in \mc{F}} \ell(C, X)^r \right)^{1/r} \le \left( \frac{1}{2^r}\mbb{E}[\|X\|^2 + \rho^2]^r\right)^{1/r} \le \left( \frac{1}{2}\mbb{E}\|X\|^{2r}+\frac{1}{2}\rho^{2r} \right)^{1/r}  \le W< \infty,
\]
while standard results about VC-dimension of k-means clustering hypothesis class guarantees that $\mc{C} \le k(d+1)$ \citep{linder1994rates}.
On the other hand, we can verify that
\[
\mbb{E} [\ell(C, X) - \ell(C', X)] ^2  \le L_{\rho} \|C-C'\|_2^2,
\]
which validates condition (i) in Lemma $\ref{nonconvex}$. 
The fact that the Hessian matrix of $C \to R(C)$ is positive definite at $C^*$ prompts $R(\hat C_n) - R(C^*)  \ge c \| \hat  C_n- C^*\|^2$ for some $c>0$ in a neighborhood $U$ around any optimal codebook $C^*$.
Thus, Lemma $\ref{local}$ confirms the multi-scale Bernstein's condition with $\gamma = ((r-2)/(r-1), 1, \ldots, 1)$.
The inequality is then obtained from Theorem $\ref{theofast}$.
\end{proof}

\section{Discussion and future work}

We have shown that fast learning rates for heavy-tailed losses can be obtained for hypothesis classes with an integrable envelope when the loss satisfies the multi-scale Bernstein's condition.
We then verify those conditions and obtain new convergence rates for $k$-means clustering with heavy-tailed losses. 
The analyses extend and complement existing results in the literature from both theoretical and practical points of view.
We also introduce a new fast-rate assumption, the multi-scale Bernstein's condition, and provide a clear path to verify the assumption in practice. 
We believe that the multi-scale Bernstein's condition is the proper assumption to study fast rates for unbounded losses, for its ability to separate the behaviors of the risk function on microscopic and macroscopic scales, for which the distinction can only be observed in an unbounded setting.

There are several avenues for improvement. 
First, we would like to consider hypothesis class with polynomial entropy bounds.
%We expect that the results of the paper can be extended to address more complicated hypothesis class.
Similarly, the condition of independent and identically distributed observations can be replaced with mixing properties \citep{steinwart2009fast,hang2014fast,dinh2015learning}.
While the condition of integrable envelope is an improvement from the condition of sub-exponential envelope previously investigated in the literature,
%, one usually needs to impose certain compactness assumption on the hypothesis class to ensure the condition, as we have seen in the above applications.
it would be interesting to see if the rates retain under weaker conditions, for example, the assumption that the $L^r$-diameter of the hypothesis class is bounded \citep{cortes2013relative}. 
Finally, the recent work of \citet{brownlees2015empirical, hsu2016loss} about robust estimators as alternatives of ERM to study heavy-tailed losses has yielded more favorable learning rates under weaker conditions, and we would like to extend the result in this paper to study such estimators.

\newpage

{
\small
\bibliographystyle{plainnat}
\bibliography{biblio}

\begin{thebibliography}{24}
\providecommand{\natexlab}[1]{#1}
\providecommand{\url}[1]{\texttt{#1}}
\expandafter\ifx\csname urlstyle\endcsname\relax
  \providecommand{\doi}[1]{doi: #1}\else
  \providecommand{\doi}{doi: \begingroup \urlstyle{rm}\Url}\fi

\bibitem[Antos et~al.(2005)Antos, Gy{\"o}rfi, and
  Gy{\"o}rgy]{antos2005individual}
Andr{\'a}s Antos, L{\'a}szl{\'o} Gy{\"o}rfi, and Andr{\'a}s Gy{\"o}rgy.
\newblock Individual convergence rates in empirical vector quantizer design.
\newblock \emph{IEEE Transactions on Information Theory}, 51\penalty0
  (11):\penalty0 4013--4022, 2005.

\bibitem[Bartlett et~al.(1998)Bartlett, Linder, and
  Lugosi]{bartlett1998minimax}
Peter~L Bartlett, Tam{\'a}s Linder, and G{\'a}bor Lugosi.
\newblock The minimax distortion redundancy in empirical quantizer design.
\newblock \emph{IEEE Transactions on Information Theory}, 44\penalty0
  (5):\penalty0 1802--1813, 1998.

\bibitem[Ben-David(2007)]{ben2007framework}
Shai Ben-David.
\newblock A framework for statistical clustering with constant time
  approximation algorithms for k-median and k-means clustering.
\newblock \emph{Machine Learning}, 66\penalty0 (2):\penalty0 243--257, 2007.

\bibitem[Boucheron et~al.(2013)Boucheron, Lugosi, and
  Massart]{boucheron2013concentration}
St{\'e}phane Boucheron, G{\'a}bor Lugosi, and Pascal Massart.
\newblock \emph{Concentration inequalities: A nonasymptotic theory of
  independence}.
\newblock OUP Oxford, 2013.

\bibitem[Bousquet et~al.(2004)Bousquet, Boucheron, and
  Lugosi]{bousquet2004introduction}
Olivier Bousquet, St{\'e}phane Boucheron, and G{\'a}bor Lugosi.
\newblock Introduction to statistical learning theory.
\newblock In \emph{Advanced lectures on machine learning}, pages 169--207.
  Springer, 2004.

\bibitem[Brownlees et~al.(2015)Brownlees, Joly, and
  Lugosi]{brownlees2015empirical}
Christian Brownlees, Emilien Joly, and G{\'a}bor Lugosi.
\newblock Empirical risk minimization for heavy-tailed losses.
\newblock \emph{The Annals of Statistics}, 43\penalty0 (6):\penalty0
  2507--2536, 2015.

\bibitem[Cortes et~al.(2013)Cortes, Greenberg, and Mohri]{cortes2013relative}
Corinna Cortes, Spencer Greenberg, and Mehryar Mohri.
\newblock Relative deviation learning bounds and generalization with unbounded
  loss functions.
\newblock \emph{arXiv:1310.5796}, 2013.

\bibitem[Dinh et~al.(2015)Dinh, Ho, Cuong, Nguyen, and
  Nguyen]{dinh2015learning}
Vu~Dinh, Lam Si~Tung Ho, Nguyen~Viet Cuong, Duy Nguyen, and Binh~T Nguyen.
\newblock Learning from non-iid data: Fast rates for the one-vs-all multiclass
  plug-in classifiers.
\newblock In \emph{Theory and Applications of Models of Computation}, pages
  375--387. Springer, 2015.

\bibitem[Gr{\"u}nwald(2012)]{grunwald2012safe}
Peter Gr{\"u}nwald.
\newblock The safe {B}ayesian: learning the learning rate via the mixability
  gap.
\newblock In \emph{Proceedings of the 23rd international conference on
  Algorithmic Learning Theory}, pages 169--183. Springer-Verlag, 2012.

\bibitem[Hang and Steinwart(2014)]{hang2014fast}
Hanyuan Hang and Ingo Steinwart.
\newblock Fast learning from $\alpha$-mixing observations.
\newblock \emph{Journal of Multivariate Analysis}, 127:\penalty0 184--199,
  2014.

\bibitem[Hsu and Sabato(2016)]{hsu2016loss}
Daniel Hsu and Sivan Sabato.
\newblock Loss minimization and parameter estimation with heavy tails.
\newblock \emph{Journal of Machine Learning Research}, 17\penalty0
  (18):\penalty0 1--40, 2016.

\bibitem[Lecu{\'e} and Mendelson(2012)]{lecue2012general}
Guillaume Lecu{\'e} and Shahar Mendelson.
\newblock General nonexact oracle inequalities for classes with a
  sub-exponential envelope.
\newblock \emph{The Annals of Statistics}, 40\penalty0 (2):\penalty0 832--860,
  2012.

\bibitem[Lecu{\'e} and Mendelson(2013)]{lecue2013learning}
Guillaume Lecu{\'e} and Shahar Mendelson.
\newblock Learning sub-{G}aussian classes: Upper and minimax bounds.
\newblock \emph{arXiv:1305.4825}, 2013.

\bibitem[Lederer and van~de Geer(2014)]{lederer2014new}
Johannes Lederer and Sara van~de Geer.
\newblock New concentration inequalities for suprema of empirical processes.
\newblock \emph{Bernoulli}, 20\penalty0 (4):\penalty0 2020--2038, 2014.

\bibitem[Levrard(2013)]{levrard2013fast}
Cl{\'e}ment Levrard.
\newblock Fast rates for empirical vector quantization.
\newblock \emph{Electronic Journal of Statistics}, 7:\penalty0 1716--1746,
  2013.

\bibitem[Linder et~al.(1994)Linder, Lugosi, and Zeger]{linder1994rates}
Tam{\'a}s Linder, G{\'a}bor Lugosi, and Kenneth Zeger.
\newblock Rates of convergence in the source coding theorem, in empirical
  quantizer design, and in universal lossy source coding.
\newblock \emph{IEEE Transactions on Information Theory}, 40\penalty0
  (6):\penalty0 1728--1740, 1994.

\bibitem[Mehta and Williamson(2014)]{mehta2014stochastic}
Nishant~A Mehta and Robert~C Williamson.
\newblock From stochastic mixability to fast rates.
\newblock In \emph{Advances in Neural Information Processing Systems}, pages
  1197--1205, 2014.

\bibitem[Mendelson(2008)]{mendelson2008obtaining}
Shahar Mendelson.
\newblock Obtaining fast error rates in nonconvex situations.
\newblock \emph{Journal of Complexity}, 24\penalty0 (3):\penalty0 380--397,
  2008.

\bibitem[Pollard(1982)]{pollard1982central}
David Pollard.
\newblock A central limit theorem for k-means clustering.
\newblock \emph{The Annals of Probability}, pages 919--926, 1982.

\bibitem[Steinwart and Christmann(2009)]{steinwart2009fast}
Ingo Steinwart and Andreas Christmann.
\newblock Fast learning from non-iid observations.
\newblock In \emph{Advances in Neural Information Processing Systems}, pages
  1768--1776, 2009.

\bibitem[Telgarsky and Dasgupta(2013)]{telgarsky2013moment}
Matus~J Telgarsky and Sanjoy Dasgupta.
\newblock Moment-based uniform deviation bounds for $ k $-means and friends.
\newblock In \emph{Advances in Neural Information Processing Systems}, pages
  2940--2948, 2013.

\bibitem[van Erven et~al.(2015)van Erven, Gr{\"u}nwald, Mehta, Reid, and
  Williamson]{van2015fast}
Tim van Erven, Peter~D Gr{\"u}nwald, Nishant~A Mehta, Mark~D Reid, and Robert~C
  Williamson.
\newblock Fast rates in statistical and online learning.
\newblock \emph{Journal of Machine Learning Research}, 16:\penalty0 1793--1861,
  2015.

\bibitem[Zhang(2006{\natexlab{a}})]{zhang2006e}
Tong Zhang.
\newblock From $\epsilon$-entropy to {KL}-entropy: Analysis of minimum
  information complexity density estimation.
\newblock \emph{The Annals of Statistics}, 34\penalty0 (5):\penalty0
  2180--2210, 2006{\natexlab{a}}.

\bibitem[Zhang(2006{\natexlab{b}})]{zhang2006information}
Tong Zhang.
\newblock Information-theoretic upper and lower bounds for statistical
  estimation.
\newblock \emph{IEEE Transactions on Information Theory}, 52\penalty0
  (4):\penalty0 1307--1321, 2006{\natexlab{b}}.

\end{thebibliography}
}
\newpage

\section*{Appendix}

\begin{proof}[Proof of Lemma $\ref{lem12b}$]
Define $q=r-1$; $p=\frac{r-1}{r-2}$.
Note that $u$ is non-negative, by Holder's inequality we have
\[
P_n u^2 \le \left(  P_n u \right)^{ 1/p} \left(  P_n u^r \right)^{1/q}
\]
which implies
\[
\sup_{u \in \mathcal{K}(g_0)}  \sqrt{P_n u^2} \le \left( \sup_{u \in \mathcal{K}(g_0)}    P_n u \right)^{\frac{1}{2p}} \left( \sup_{u \in \mathcal{K}(g_0)}   P_n u^r\right)^{\frac{1}{2q}}.
\]
If we rewrite this inequality as $y\le h^{1/p} k^{1/q}$, then
\begin{equation}
\mathbb{E} y \le \mathbb{E} [h^{1/p}k^{1/q}] \le (\mathbb{E} h)^{1/p} (\mathbb{E} k)^{1/q}.
\label{e1}
\end{equation}
Since $Z_i$'s are independently identically distributed, we have
\[
\mathbb{E} \sup_{u \in \mathcal{K}(g_0)}  P_n u^r =  \mathbb{E} \sup_{u \in \mathcal{K}(g_0)} \frac{u^r(Z_1)+ \ldots + u^r(Z_n)}{n} \le \mathbb{E} \frac{1}{n} \sum_{i=1}^n{\sup_{u \in \mathcal{K}(g_0)}  u^r(Z_i)} = \mathbb{E} \sup_{u \in \mathcal{K}(g_0)}  u^r(Z_1).
\]
This implies
\begin{equation}
\mathbb{E} \sup_{u \in \mathcal{K}(g_0)}  \sqrt{P_n u^r} \le \sqrt{\mathbb{E} \sup_{u\in  \mathcal{K}(g_0)}  P_n u^r } \le \left( \mathbb{E} \sup_{u \in \mathcal{K}(g_0)}  u^r(Z_1) \right)^{1/2}.
\label{e2}
\end{equation}
On the other hand, using the notation $\bar P_n = P_n -P$, we have
\begin{equation}
\mathbb{E} \sup_{u \in \mathcal{K}(g_0)} P_n u \le \mathbb{E} \sup_{u \in \mathcal{K}(g_0)} \bar P_n u +  \sup_{u \in \mathcal{K}(g_0)} P u \le \mathbb{E} \sup_{u \in \mathcal{K}(g_0)} \bar P_n u + \sup_{u \in \mathcal{K}(g_0)} \sqrt{P u^2} \le  \mathbb{E} \sup_{u \in \mathcal{K}(g_0)} \bar P_n u + \epsilon.
\label{e3}
\end{equation}
Combining $\eqref{e1}$, $\eqref{e2}$ and $\eqref{e3}$, we deduce that
\[
\frac{2}{\sqrt{n}}\mathbb{E}D(\mathcal{K}_X)  \le \mathbb{E}  \sup_{u \in \mathcal{K}(g_0)}{  \sqrt{P_n u^2}}  \le \left( \epsilon + \mathbb{E}  \sup_{u \in \mathcal{K}({g_0})}{(P_n - P) u} \right)^{\frac{r-2}{2(r-1)}} (2W)^{\frac{r}{2(r-1)}}.
\]
\end{proof}

\begin{proof}[Proof of Lemma $\ref{lem12a}$]
Using Young's inequality with $p= \frac{1}{\nu}$, $q = p/(p-1) = 1/(1-\nu)$, we have
\[
a x^{\nu} = (c a x^{\nu}).\left( \frac{1}{c}\right) \le \frac{1}{p} (c a x^{\nu})^p + \frac{1}{q} \frac{1}{c^q}.
\]
We deduce that $x \le \alpha x^{\nu} + b \le \nu (c a)^{1/\nu} x + (1-\nu) c^{-1/(1-\nu)} +b$.

If we choose $c$ such that $\nu (c a)^{1/\nu} = 1/2$, or equivalently, $c = (2\nu)^{-\nu} a^{-1}$, then
\[
\frac{1}{c^{1/(1-\nu)}} = (2 \nu)^{\nu/(1-\nu)} a^{1/(1-\nu)}.
\]
We deduce that $x \le C a ^{1/(1-\nu)} + 2b$.
\end{proof}

\begin{proof}[Proof of Lemma $\ref{compute}$]
Define $\Gamma(x) = x^2/r - \left(1 -  \beta \right) x + \beta \mc{C}$. 
The minimum value of $\Gamma(x)$ will be attained at $x_0 = r\left( 1- \beta \right)/2 \le r$ with 
\[
\Gamma(x_0) = -\frac{r}{4} \left( 1 -\beta  \right)^2 + \beta \mc{C}.
\]
We note that  if $\beta < 1- 2\sqrt{\mc{C}/r}$, then $x_0 \ge 1$ and $\Gamma(x_0)<0$.
To ensure $A(l, r, \beta, \mc{C}, \alpha)<0$ for $l=x_0$, we need
\[
 y = \left [ \beta \left (1 - \frac{\alpha}{2} \right ) - \frac{1}{2} \right ] x_0+ \beta \mc{C} = -\frac{r}{4}\left[1-\beta(2-\alpha)\right][1-\beta] + \beta \mc{C} < 0. 
\]
If $\alpha \ge 1$, it is clear that 
\[
y \le - \frac{r}{4}(1-\beta)^2 + \beta \mc{C} = \Gamma(x_0) <0.
\]
If $\alpha \leq 1$, we have $y \le - r[1-\beta(2-\alpha)]^2/4 + \beta \mc{C}$; and $y<0$ if $\beta <  (1 - 2 \sqrt{C/r})/(2 - \alpha)$.
\end{proof}

\begin{proof}[Proof of Lemma $\ref{local}$]
We have
\begin{align*}
\mbb{E}(\ell(f) - \ell(f^*))^2  &\le \mbb{E}\ell(f)^2 + \mbb{E}\ell(f^*)^2 \\
&\le 2 \mbb{E}\ell(f)^2 \le 2 [\mbb{E}\ell(f)^r]^{1/(r-1)}[\mbb{E}\ell(f)] ^{(r-2)/(r-1)}\\
& \le 2 W^{r/(r-1)} [\mbb{E}\ell(f)]^{(r-2)/(r-1)}\\
& \le 2 W^{r/(r-1)} \left(\frac{\alpha}{M}\right)^{(r-2)/(r-1)}\left[\mathbb{E}(\ell(f)-\ell(f^*))\right]^{(r-2)/(r-1)}.
\end{align*}
\end{proof}

\begin{proof}[Proof of Lemma $\ref{nonconvex}$]
Since $R(f)$ has a unique minimizer at $f^*$, there exists  $\alpha>0$ such that 
\[
U_{\alpha}:= \{ f \in \mathcal{F}: R(f) \le \frac{\alpha}{\alpha-K} R(f^*) \} \subset U.
\]
where $M$ is the constant defined in Lemma $\ref{local}$.
Inside $U_{\alpha}$, we have
\begin{equation}
\mathbb{E}(\ell(f)-\ell(f^*)) = R(f) - R(f^*) \ge c d^m(f, g) \ge \frac{c}{L^{m/2}} \left( \mbb{E}(\ell(f) - \ell(f^*))^{2}\right)^{m/2}.
\label{eqn:inside}
\end{equation}
By Lemma \ref{local} and \eqref{eqn:inside}, multi-scale Bernstein's condition holds for $\gamma = ((r-2)/(r-1),2/m)$. 
\end{proof}

\begin{proof}[Proof of Corollary $\ref{convex}$]
Recall that $R(f)$ is strongly convex and $f^*$ is its unique minimizer, we have
\[
\frac{R(f) +R(f^*)}{2} \ge  R \left( \frac{f+g}{2}\right) + c d^2(f, g) \ge R(f^*) + c d^2(f, g)
\]
which implies that
\[
\mathbb{E}(\ell(f)-\ell(f^*)) = R(f) - R(f^*) \ge 2c d^2(f, g) \ge \frac{2c}{L} \mbb{E}(\ell(f) - \ell(f^*))^2.
\]
This proves the Bernstein's condition.
\end{proof}

\end{document}